\documentclass{article}

\usepackage{microtype}
\usepackage{graphicx}
\usepackage{subcaption}
\usepackage{booktabs} % for professional tables
\usepackage[table]{xcolor}
\usepackage{multirow}
\usepackage{hyperref}
\usepackage{tcolorbox}
\usepackage{enumitem}

\usepackage{xcolor}

\usepackage[accepted]{icml2026}

\usepackage{amsmath}
\usepackage{amssymb}
\usepackage{mathtools}
\usepackage{amsthm}

\usepackage[capitalize,noabbrev]{cleveref}

\theoremstyle{plain}
\newtheorem{theorem}{Theorem}[section]

\newtheorem{corollary}[theorem]{Corollary}
\theoremstyle{definition}
\newtheorem{definition}[theorem]{Definition}

\theoremstyle{remark}

\usepackage[textsize=tiny]{todonotes}

\icmltitlerunning{3ViewSense: Spatial and Mental Perspective Reasoning from Orthographic Views in VLMs}

\begin{document}

\twocolumn[
  % \icmltitle{3ViewSense: Mental Perspective Reasoning via Three-View Thinking in Vision-Language Models}
  \icmltitle{3ViewSense: Spatial and Mental Perspective Reasoning from Orthographic Views in Vision-Language Models}

  % It is OKAY to include author information, even for blind submissions: the
  % style file will automatically remove it for you unless you've provided
  % the [accepted] option to the icml2026 package.

  % List of affiliations: The first argument should be a (short) identifier you
  % will use later to specify author affiliations Academic affiliations
  % should list Department, University, City, Region, Country Industry
  % affiliations should list Company, City, Region, Country

  % You can specify symbols, otherwise they are numbered in order. Ideally, you
  % should not use this facility. Affiliations will be numbered in order of
  % appearance and this is the preferred way.
  \icmlsetsymbol{equal}{*}

  \begin{icmlauthorlist}
    \icmlauthor{Shaoxiong Zhan}{equal,thu}
    \icmlauthor{Yanlin Lai}{equal,thu}
    \icmlauthor{Zheng Liu}{equal,thu}
    \icmlauthor{Hai Lin}{thu} \\
    \icmlauthor{Shen Li}{cqu}
    \icmlauthor{Xiaodong Cai}{thu}
    \icmlauthor{Zijian Lin}{thu}
    \icmlauthor{Wen Huang}{thu}
    %\icmlauthor{}{sch}
    \icmlauthor{Hai-Tao Zheng}{thu}
    % \icmlauthor{Firstname8 Lastname8}{yyy,comp}
    %\icmlauthor{}{sch}
    %\icmlauthor{}{sch}
  \end{icmlauthorlist}

  \icmlaffiliation{thu}{Shenzhen International Graduate School, Tsinghua University, Shenzhen, China}
  \icmlaffiliation{cqu}{School of Software Engineering, Chongqing University, Chongqing, China}
  % \icmlaffiliation{sch}{School of ZZZ, Institute of WWW, Location, Country}

  \icmlcorrespondingauthor{Shaoxiong Zhan}{zhanshaoxiongthu@gmail.com}
  \icmlcorrespondingauthor{Hai-Tao Zheng}{zheng.haitao@sz.tsinghua.edu.cn}

  % You may provide any keywords that you find helpful for describing your
  % paper; these are used to populate the "keywords" metadata in the PDF but
  % will not be shown in the document
  \icmlkeywords{Machine Learning, ICML}

  \vskip 0.3in
]

% this must go after the closing bracket ] following \twocolumn[ ...

% This command actually creates the footnote in the first column listing the
% affiliations and the copyright notice. The command takes one argument, which
% is text to display at the start of the footnote. The \icmlEqualContribution
% command is standard text for equal contribution. Remove it (just {}) if you
% do not need this facility.

% Use ONE of the following lines. DO NOT remove the command.
% If you have no special notice, KEEP empty braces:
\printAffiliationsAndNotice{}  % no special notice (required even if empty)
% Or, if applicable, use the standard equal contribution text:
% \printAffiliationsAndNotice{\icmlEqualContribution}

\begin{abstract}

  Current Large Language Models have achieved Olympiad-level logic, yet Vision-Language Models paradoxically falter on elementary spatial tasks like block counting. This capability mismatch reveals a critical ``spatial intelligence gap,'' where models fail to construct coherent 3D mental representations from 2D observations. We uncover this gap via diagnostic analyses showing the bottleneck is a missing view-consistent spatial interface rather than insufficient visual features or weak reasoning. To bridge this, we introduce \textbf{3ViewSense}, a framework that grounds spatial reasoning in Orthographic Views. Drawing on engineering cognition, we propose a ``Simulate-and-Reason'' mechanism that decomposes complex scenes into canonical orthographic projections to resolve geometric ambiguities. By aligning egocentric perceptions with these allocentric references, our method facilitates explicit mental rotation and reconstruction. Empirical results on spatial reasoning benchmarks demonstrate that our method significantly outperforms existing baselines, with consistent gains on occlusion-heavy counting and view-consistent spatial reasoning. The framework also improves the stability and consistency of spatial descriptions, offering a scalable path toward stronger spatial intelligence in multimodal systems.~\footnote{\url{https://github.com/Jasaxion/3ViewSense}}
\end{abstract}

\section{Introduction}
\label{sec:intro}

% ====zsx comments====
% 这份结构旨在突出 “从工程制图直觉（三视图）到机器空间智能” 的故事线。
% 论文建议结构 (Proposed Outline)
% 1. Introduction
% 这是全文的门面，逻辑流必须非常顺畅（即你摘要的扩充版）。
% 1.1 The Paradox of Modern AI: 开篇点题（Ref 5 风格），大模型拿了奥赛金牌，但 VLM 却数不清方块。定义 "Spatial Intelligence Gap"。
% 1.2 The Missing Link - Explicit Spatial Representation: 解释为什么会这样？因为现在的 VLM 是“黑盒”映射，缺乏对 3D 空间的显式建模能力（Mental Modeling）。
% 1.3 Insight from Engineering Cognition: 引入你的核心 Insight。人类工程师如何理解复杂 3D 物体？通过 Orthographic Views (正交视图)——即正视、俯视、侧视的解耦。
% 1.4 3ViewSense Framework: 简述你的方法（Simulate-and-Reason 机制）。
% 1.5 Contributions: 列出 3 点贡献（框架、机制、SOTA 结果）。

% 现在的 LLM 模型能解数学奥赛题目，但却连简单的数方块也数不清...
% Vision Encoder 具备足够的信息，LLM 具备很强的语言能力，但是视觉-语言之间仍存在较大的理解鸿沟
% 错误的思维方式导致了视觉信息的理解不全
% ====zsx comments====

% ====最新的 Intro====
% Introduction 部分需要完全重构。
%
% 1. Pre-evaluation Vison encoder已经足够强大了。
% 2. 加入图像 Caption 的 发现，文本推理能力也很强大
% 3. 假设原因是我们认为是文本推理的过程没有正确利用到图像信息
% 4. 给出一些具体的实验结果数值
% -->于是提出我们的三视图心理推理
% 3 个贡献：
% --1.诊断性 Benchmark 数据集 OrthoMind-3D，
% --2.Simulate-and-Reason 的思路，让模型具备转变思考角度从正交视图进行推理，最终解决问题的能力
% --3.？

% 可以参考的文献：
%
% ====最新的 Intro====

\begin{figure*}[t]
  \centering
  \includegraphics[width=0.85\textwidth]{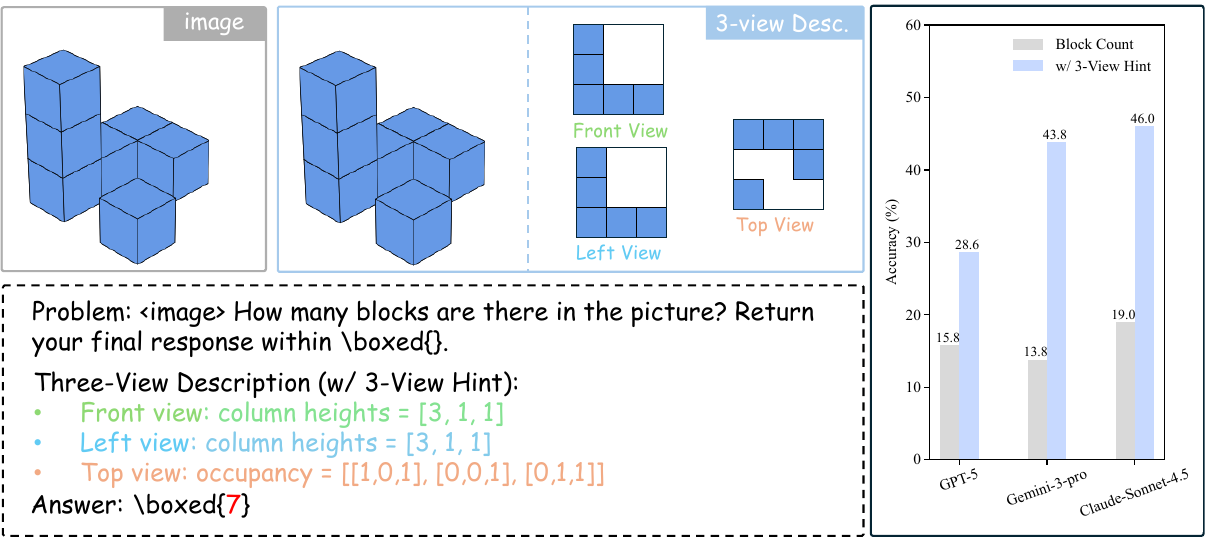}
  \caption{Motivation for explicit three-view reasoning. Providing explicit orthographic three-view descriptions (front/left/top) improves block-counting performance under occlusion, highlighting the role of view-consistent spatial representations.}
  \label{fig:explicit_3view_motivation}
\end{figure*}

The advent of Large Vision-Language Models (VLMs) has revolutionized multimodal understanding. However, a startling paradox remains: while state-of-the-art models (e.g., GPT-4o, GPT-5 class) exhibit Olympiad-level symbolic logic~\cite{guo2025deepseek, jaech2024openai, zhan2025mathsmith}, they often falter on elementary spatial tasks, such as counting stacked blocks under occlusion~\cite{cai2025has}. This capability mismatch reveals a critical spatial intelligence gap: models possess powerful deductive engines but lack a coherent 3D mental representation mechanism to ground their reasoning in the physical world, leading to severe performance degradation when reasoning over uncertain, partially observed spatial regions.

To identify the root cause of this gap, we conducted a diagnostic investigation. First, we question whether the visual encoder is the bottleneck. In our visual information sufficiency test, we freeze the visual features and train a lightweight probe on the block counting task. The probe achieves high accuracy ($55.8\%$) where the full VLM fails, proving that the encoder successfully extracts sufficient geometric information (detailed in Appendix~\ref{appendix:pre_validation}).

Second, we study the reasoning interface. Prior work shows that strengthening the language model (or language-only supervision) can substantially improve VLM reasoning~\cite{he2024distill}. Accordingly, we augment the image input with an \emph{additional} orthographic three-view context (front/left/top) generated from image descriptions.
As illustrated in Figure~\ref{fig:explicit_3view_motivation}, conditioning the same model on both the image and this three-view contextual description leads to a dramatic improvement in reasoning accuracy
(e.g., Gemini-3-pro improves by over $30\%$ absolute, detailed in Appendix~\ref{subsec:3view_desc_results}). This implies that the reasoning engine is intrinsically capable but lacks a structured spatial interface to reliably access and organize the relevant visual information.

Synthesizing these findings, we hypothesize that the spatial intelligence gap stems neither from ``blind'' encoders nor ``dumb'' reasoners, but from a misalignment in the inference process: current models lack a stable \textit{view-consistent intermediate representation} to bridge egocentric perception and logical reasoning. Without this bridge, visual features are not effectively translated into spatial concepts, leading to reasoning drift and hallucinations.

To bridge this gap, we introduce \textbf{3ViewSense}, a framework that grounds spatial reasoning in orthographic views.
Inspired by the way engineering drawings define 3D structure through standard projections, 3ViewSense follows a simulate-and-reason pipeline that first induces a view-consistent spatial representation and then performs explicit reasoning on top of it.

Concretely, 3ViewSense separates the learning objective into two stages (Section~\ref{subsec:framework}).
In Stage~I, Orthographic Mental Simulation (OMS) is trained to generate structured orthographic descriptions from an egocentric image.
In Stage~II, View-Grounded Reasoning (VGR) is trained to solve spatial queries by conditioning on the induced orthographic views and producing the final answer.
Starting from the Stage~II model, we further apply GRPO-based reinforcement learning to refine correctness under math-verifiable rewards while preserving view-grounded reasoning behavior. This design is motivated by our diagnostic findings: spatial reasoning becomes substantially more reliable when the model can mentally infer and complete information from other orthographic views to form a view-consistent representation.

Following the 3ViewSense framework, we train models on our in-domain dataset OrthoMind-3D to acquire orthographic-view mental simulation and view-grounded reasoning.
Experiments show that 3ViewSense consistently improves reasoning accuracy on both in-domain and out-of-domain splits, and the gains also transfer to other spatial reasoning benchmarks (e.g., MindCube-Tiny: 27.2$\rightarrow$38.9).

Our contributions are as follows: (1) We introduce \textbf{OrthoMind-3D}, a diagnostic benchmark that exposes key failure modes of spatial reasoning under occlusion and perspective shifts. (2) Based on this diagnosis, we propose 3ViewSense, a Simulate-and-Reason framework that grounds reasoning in mentally induced orthographic views. (3) 3ViewSense delivers strong accuracy gains on OrthoMind-3D (in-domain and out-of-domain) and transfers to other spatial reasoning benchmarks.

\section{Related Work}
\label{sec:related_work}

\subsection{Spatial Reasoning with VLMs}

The landscape of spatial capabilities in Vision-Language Models (VLMs) has evolved significantly, progressing from elementary visual perception tasks \cite{li2025llavaonevision, bai2023qwenvl} to intricate spatial reasoning that demands deep mental simulation \cite{yin2025spatial, chen2025think, lee2025perspective}.
Recent efforts to bridge the spatial intelligence gap in VLMs generally fall into three paradigms. 

% \textbf{Auxiliary Modalities and Tool Usage.} Recognizing the limitations of RGB-only inputs, a significant line of work augments VLMs with richer, vision-centric data. Some approaches \cite{wu2025spatial, chen2025think} incorporate a 3D encoder \cite{wang2025vggt} to extract explicit 3D information as bridging knowledge to improve vision perception, while others utilize fine-tuning with richer vision-centric data like segmentation masks to improve reasoning ability \cite{chen2025visrl, liu2025spatialcot, wang2024segllm, fan2025vlm, ma2024spatialpin}. Beyond internal integration, some work \cite{zhou2025reinforced, su2025openthinkimg, wu2025vtool} adopts a tool-centric approach, exploiting the planning and programming abilities of language models to actively call external vision modules as executable tools. While effective, these methods often incur high computational overhead and dependency on external pre-trained models.

\textbf{Auxiliary Modalities and Tool Usage.} To overcome the limitations of RGB-only inputs, works augment VLMs with 3D encoders \cite{wu2025spatial, chen2025think, wang2025vggt} or fine-tune with vision-centric data like segmentation masks \cite{chen2025visrl, liu2025spatialcot, wang2024segllm, fan2025vlm, ma2024spatialpin}. 
Beyond internal integration, some work \cite{zhou2025reinforced, su2025openthinkimg, wu2025vtool} adopts a tool-centric approach to exploit the planning and programming abilities of LLMs to actively call external vision modules as executable tools.
% Others adopt tool-centric approaches \cite{zhou2025reinforced, su2025openthinkimg, wu2025vtool}, leveraging LLMs to execute external vision modules. 
While effective, these methods often incur high computational overhead and dependency on external models.

\textbf{Advanced Training Strategies} To elicit stronger reasoning capabilities without relying on external tools, recent studies have turned to specialized training paradigms. 
% SpatialLadder \cite{li2025spatialladder} proposes a progressive curriculum, hierarchically building capabilities from object localization to complex reasoning. More recently, Reinforcement Learning (RL) has emerged as a promising direction. Therefore, many works \cite{liao2025improved, ouyang2025spacer} utilize RL to incentivize models to self-correct and enhance their spatial logic.
SpatialLadder \cite{li2025spatialladder} employs progressive curriculum learning, while recent Reinforcement Learning (RL) methods \cite{liao2025improved, ouyang2025spacer, xu2026grouter} incentivize models to self-correct reasoning paths.
% However, as noted in recent literature, there remains a general lack of effective training frameworks that systematically advance from basic to complex spatial concepts.

\textbf{Mental Modeling and Perspective Taking.} A parallel stream of research focuses on internalizing spatial understanding through Spatial Mental Models. 
% MindCube \cite{yin2025spatial} emphasize constructing Cognitive Maps or Geometric Imagination from limited views. APC \cite{lee2025perspective} introduces Mental Imagery Simulation to overcome Egocentric Bias. These methods aim to hallucinate plausible 3D structures from 2D inputs.
MindCube \cite{yin2025spatial} and APC \cite{lee2025perspective} utilize cognitive maps or mental imagery to hallucinate plausible 3D structures from 2D inputs to overcome egocentric bias.

Distinct from methods relying on auxiliary data, unstructured RL, or implicit imagery, 3ViewSense introduces a structured ``Simulate-and-Reason" mechanism grounded in orthographic views, offering a computationally efficient and geometrically rigorous path to spatial intelligence.

\subsection{Benchmarking Spatial Reasoning}

% Accurately evaluating spatial intelligence remains as challenging as developing the models themselves. The field has shifted from static, object-centric VQA benchmarks to dynamic, space-centric evaluations.

% \textbf{From Object to Space.} Traditional benchmarks typically focus on ``what" is in the image, often neglecting ``where" things are in a physical sense. 
% OmniSpatial \cite{jia2025omnispatial} formalizes this by distinguishing ``Object-level" perception from ``Space-level" reasoning, emphasizing capabilities beyond simple recognition.

% \textbf{Cognitive and Mental Evaluation.} Recent benchmarks have begun to probe the internal ``mental models" of VLMs. 
% MindCube \cite{yin2025spatial} tests inferring unseen spatial info such as distance, rotation, and occlusion from limited views. 
% Open3D-VQA \cite{zhang2025open3dvqa} and Sphere \cite{zhang2024sphere} expand evaluation to open-world environments, testing model's ability to identify ``blind spots" and perform reasoning in unconstrained 3D spaces.

% Collectively, while these benchmarks highlight general spatial deficits, they lack diagnostic granularity for rigorous 2D-3D alignment. This motivates OrthoMind-3D, designed to specifically assess mental rotation and orthogonal projection capabilities.

The evaluation of spatial intelligence in Vision-Language Models (VLMs) has evolved from static, object-centric recognition to dynamic, space-centric reasoning. Foundationally, CV-Bench~\cite{tong2024cambrian} establishes vision-centric grounding by reformulating traditional vision tasks into VQA formats, while OmniSpatial~\cite{jia2025omnispatial} formalizes the transition from ``Object-level'' perception to high-level ``Space-level'' reasoning. To address multi-dimensional complexities, SPBench~\cite{li2025spatialladder} provides a hierarchical suite spanning single-image and multi-view modalities, and ViewSpatial-Bench~\cite{li2025viewspatial} specifically probes the gap between egocentric (camera-centered) and allocentric (entity-centered) spatial frames. Furthermore, cognitive-oriented benchmarks such as MindCube~\cite{yin2025spatial}, Sphere~\cite{zhang2024sphere}, and Open3D-VQA~\cite{zhang2025open3dvqa} evaluate internal ``mental models'' by testing reasoning under occlusion or within unconstrained 3D environments. Despite these advances, existing benchmarks lack the diagnostic granularity needed for rigorous 2D–3D alignment, particularly for evaluating mental rotation and orthogonal projection. OrthoMind-3D is designed to address this gap.

\section{Methodology}
\label{sec:method}

% 3.1 Problem Formulation: 数学化定义任务。输入是 Egocentric 图像 + 文本问题，目标是空间推理答案。
\subsection{Problem Formulation}
\label{subsec:problem_formulation}

We consider the problem of spatial reasoning in Vision-Language Models (VLMs). Formally, given an egocentric 2D image $I_{ego} \in \mathcal{I}$ and a natural language query $q \in \mathcal{Q}$, the goal is to predict the correct answer $a \in \mathcal{A}$ that typically involves understanding 3D spatial relationships, object counting, or perspective taking.

\textbf{Standard VLM Inference.}
Conventional VLMs approach this task by directly modeling the conditional probability distribution $P(a | I_{ego}, q)$. The optimal answer $a^*$ is obtained by maximizing this likelihood:
\begin{equation}
  a^* = \mathop{\arg\max}_{a \in \mathcal{A}} P(a \mid I_{ego}, q).
\end{equation}
However, this end-to-end formulation treats spatial reasoning as a black-box mapping. As discussed in Section~\ref{sec:intro}, this is inherently ill-posed because a single 2D image $I_{ego}$ creates ambiguity regarding the underlying 3D structure (e.g., depth occlusion), often leading to spatial hallucinations when complex reasoning is required.

\textbf{3ViewSense Formulation.}
To bridge this spatial intelligence gap, we propose to explicitly model the mental imagery of the scene's 3D structure. Drawing inspiration from engineering cognition, we introduce a set of latent variables $\mathcal{V} = \{v_{front}, v_{left}, v_{top}\}$, representing the canonical Orthographic Views (front, top, and left projections) of the scene.

We reformulate the reasoning process as a two-stage probabilistic framework: (1) \textit{Mental Simulation}, where the model infers the orthographic views from the egocentric input, and (2) \textit{View-Grounded Reasoning}, where the answer is derived based on these explicit spatial priors.

Mathematically, we decompose the objective using the chain rule of probability. The inference of answer $a$ is conditioned on both the input and the generated orthographic mental images:
\begin{equation}
  P(a \mid I_{ego}, q) = \sum_{\mathcal{V}} P(a \mid \mathcal{V}, I_{ego}, q) \cdot P(\mathcal{V} \mid I_{ego}, q).
\end{equation}
Since integrating over all possible view combinations is intractable, we approximate this by maximizing the joint probability through a deterministic ``Simulate-and-Reason'' pipeline. We first generate the most probable set of orthographic views $\hat{\mathcal{V}}$:
\begin{equation}
  \label{eq:simulation}
  \hat{\mathcal{V}} = \mathop{\arg\max}_{\mathcal{V}} P_{\theta_{sim}}(\mathcal{V} \mid I_{ego}, q),
\end{equation}
where $P_{\theta_{sim}}$ represents our Orthographic Mental Simulator. Subsequently, the final answer is predicted by reasoning over these structured views:
\begin{equation}
  \label{eq:reasoning}
  a^* = \mathop{\arg\max}_{a \in \mathcal{A}} P_{\theta_{reason}}(a \mid \hat{\mathcal{V}}, I_{ego}, q).
\end{equation}
By introducing $\hat{\mathcal{V}}$, we transform the abstract 3D spatial reasoning task into a tractable pattern recognition problem on structured 2D planes, reducing geometric ambiguity.

\subsection{Datasets Construction}
\label{subsec:dataset_construction}

\begin{figure*}[t]
  \begin{center}
    \centerline{\includegraphics[width=0.98\textwidth]{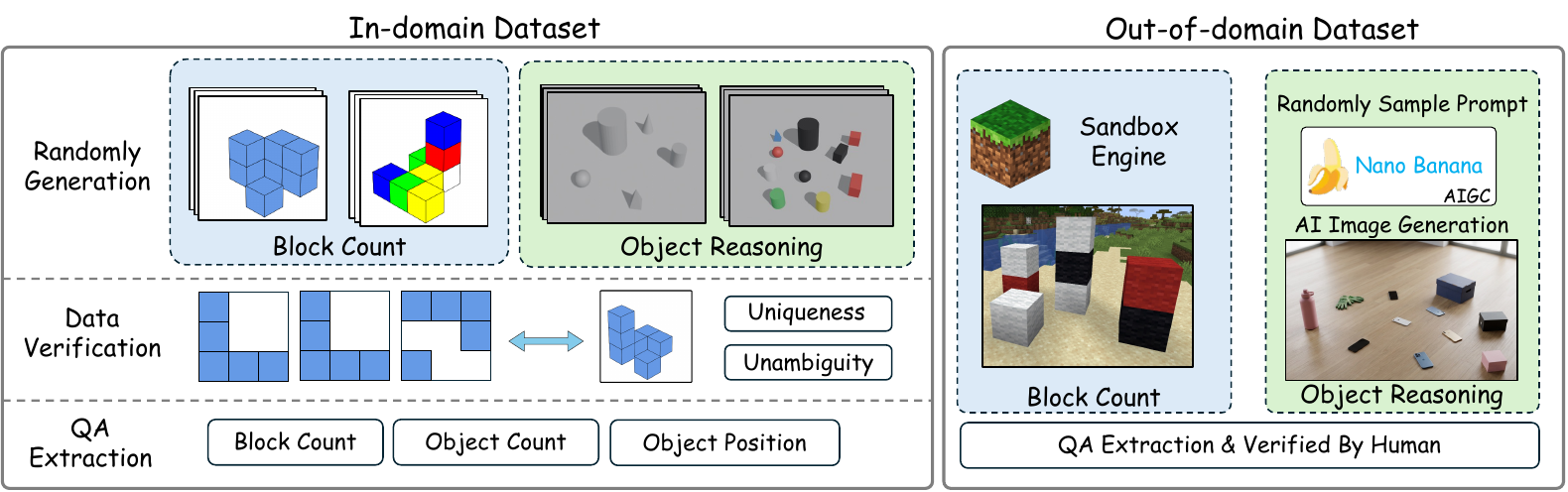}}
    \caption{
      The construction pipeline of our \textbf{OrthoMind-3D} dataset. To bridge the gap between visual perception and mental spatial reasoning, we curate data from two distinct domains. For In-Domain data, we utilize programmatic synthesis with strict geometric constraints to train the model's orthographic projection capabilities. For Out-of-Domain data, we employ sandbox game engines and generative AI techniques to evaluate the model's robustness and generalization in unstructured environments.
    }
    \label{fig:dataset_construction}
  \end{center}
\end{figure*}

To systematically develop and evaluate the spatial intelligence of Vision-Language Models, specifically their ability to perform mental perspective reasoning, we curate a diagnostic dataset named \textbf{OrthoMind-3D}. Rather than aiming for explicit 3D reconstruction from 2D inputs, this benchmark serves a dual purpose: (1) to rigorously evaluate the extent to which orthographic mental simulation can enhance reasoning precision and mitigate spatial hallucinations in complex geometric scenarios; and (2) to enable the model to learn this ``Simulate-and-Reason'' process by inferring latent orthographic views from single-view egocentric inputs to support more robust decision-making.

As illustrated in Figure~\ref{fig:dataset_construction}, the data curation pipeline is bifurcated into two streams: In-Domain data synthesized for explicit orthographic training, and Out-of-Domain (OOD) data designed to assess generalization robustness. The dataset covers two primary tasks: \textit{Block Counting}, which targets volumetric reasoning and complex occlusion handling in 3D structures, and \textit{Object Reasoning}, which evaluates capabilities in both relative spatial positioning and general object enumeration. To ensure fine-grained analysis, both tasks are further stratified into attribute-specific (e.g., querying color/size) and single-attribute sub-tasks. For comprehensive statistics and detailed visualization examples of the collected data, please refer to Appendix~\ref{sub_appendix:dataset_details}.

\begin{figure*}[htp]
  \begin{center}
    \centerline{\includegraphics[width=0.98\textwidth]{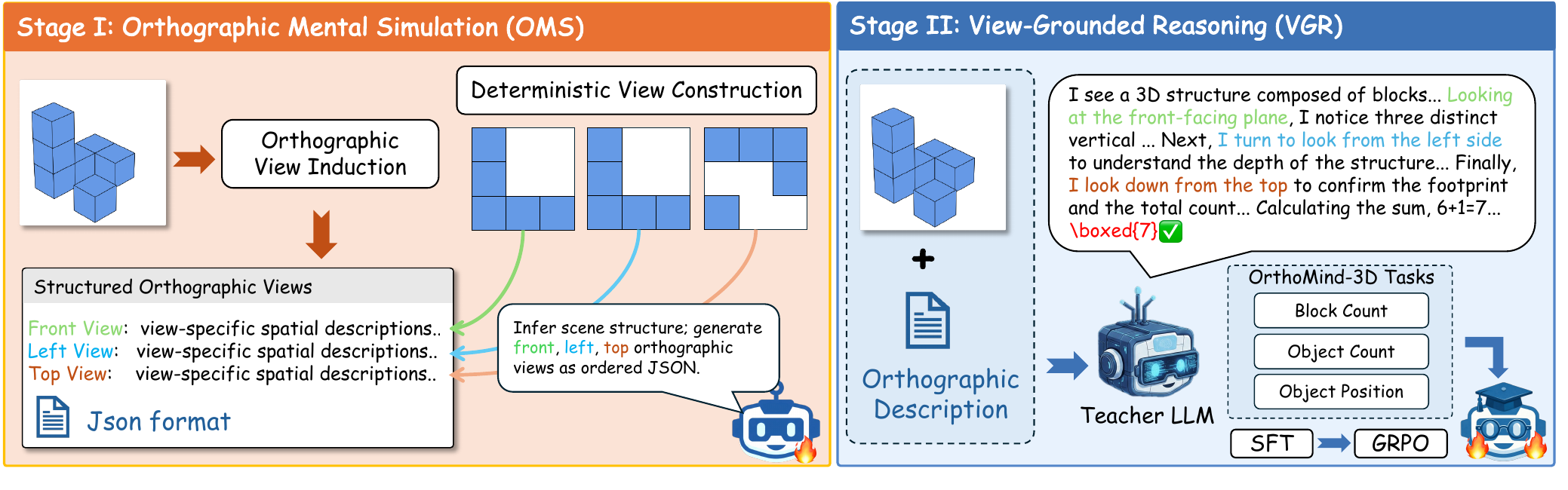}}
    \caption{
      The training framework of \textbf{3ViewSense}. Stage I learns to induce canonical front, left, and top orthographic views from an egocentric input. Stage II performs view-grounded reasoning by integrating the inferred views to generate reasoning traces and final answers, while reinforcement learning further internalizes this view-based reasoning capability.
    }
    \label{fig:3viewsense_training_framework}
  \end{center}
\end{figure*}

\textbf{Block Counting Task.}
The core challenge in counting stacked blocks from a single viewpoint lies in the ambiguity of depth. To train a model that can reliably deduce 3D structures via orthographic views, it is imperative that the mapping from the provided three views (top, front, left) to the total cube count is bijective.
For the \textbf{In-Domain} subset, we employ programmatic synthesis to generate block configurations. However, random stacking often yields ambiguous structures where multiple 3D configurations correspond to the same projections. To enforce strict bijectivity between the 3D configuration and its 2D projections, we derive a necessary and sufficient uniqueness condition. Formally, a stack configuration $H$ (where $H_{x,y}$ denotes the height at position $x,y$) is uniquely determined if and only if every occupied position satisfies:
\begin{equation}
  \label{eq:uniqueness}
  \begin{split}
    \forall (x,y), \quad & \underbrace{\left[ H_{x,y} = 1 \land (M^c = 1 \lor M^r = 1) \right]}_{\text{Case I: Base-Level Dominance}} \\
    \lor \quad & \underbrace{\left[ H_{x,y} > 1 \land (H_{x,y} > O^c \lor H_{x,y} > O^r) \right]}_{\text{Case II: Multi-Level Occlusion}}
  \end{split}
\end{equation}
where $M^c, M^r$ denote the global maximums of the corresponding column/row, and $O^c, O^r$ represent the maximum heights of other blocks in the same line (excluding $(x,y)$). We rigorously filter synthetic data to ensure this condition holds (proof in Appendix~\ref{appendix:proof_of_uniqueness}).

For the Out-of-Domain subset, we assess robustness using a voxel-based sandbox engine. Deviating from rigid in-domain grids, blocks are stochastically scattered to form unstructured, high-entropy piles. We manually sample these scenes from diverse viewpoints to introduce natural perspective variations. Each query-answer pair is manually verified by human annotators for correctness.

\textbf{Object Reasoning Task.}
This task assesses capabilities in both relative spatial positioning and object enumeration.
For In-Domain data, we utilize a 3D rendering engine. Objects are arranged on a single horizontal plane to decouple spatial reasoning from vertical occlusion. We define two sub-tasks: \textit{Object Counting} and \textit{Object Positioning}. For positioning, we discretize spatial relations into 8 directions (e.g., ``front'', ``front-left''). To handle ambiguity near the boundary of the axis, we treat both the cardinal and intermediate directions as valid labels if the object aligns within the $5^{\circ}$ margin of a canonical axis.

For Out-of-Domain data, we synthesize photorealistic scenes using Gemini-3-Pro-Image~\cite{google2026gemini3proimage} (Nano Banana). We employ diverse prompts specifying random object attributes and environments. As detailed in Appendix~\ref{sub_appendix:prompt}, we enforce constraints like ``non-overlapping'' and ``high-angle perspective'' to ensure task feasibility. Finally, all synthesized data undergoes manual verification to guarantee the accuracy of spatial relation labels.

\subsection{3ViewSense Training Framework}
\label{subsec:framework}

We propose a modular training framework that decouples \emph{what} spatial abilities the model should acquire from \emph{how} these abilities are optimized. Conceptually, 3ViewSense consists of two capability-oriented stages: (i) \textbf{Orthographic Mental Simulation} (OMS), which equips the model with the ability to internally infer canonical orthographic views from an egocentric observation; and (ii) \textbf{View-Grounded Reasoning} (VGR), which trains the model to leverage these inferred views to solve spatial reasoning tasks.

Figure~\ref{fig:3viewsense_training_framework} illustrates the overall training pipeline that instantiates the proposed ``Simulate-and-Reason paradigm''. The framework introduces an explicit intermediate representation in the form of structured orthographic views, enabling the model to first internalize spatial structure and then reason over it in a view-grounded manner.

\textbf{Stage I: Orthographic Mental Simulation (OMS).}
Stage I focuses on learning the mental simulation process defined in Eq.~\ref{eq:simulation}, namely inducing a set of canonical orthographic views
$\mathcal{V}=\{v_{\text{front}}, v_{\text{left}}, v_{\text{top}}\}$ from a single egocentric observation.
In our implementation, OMS is trained via supervised fine-tuning (SFT) using programmatically extracted orthographic annotations from the synthetic In-Domain data (Section~\ref{subsec:dataset_construction}).
Each view is represented as a structured description that captures view-specific spatial information.
For block counting tasks, views encode visible block primitives with stacking and occlusion cues; for object reasoning tasks, views are represented as ordered perceptual sequences (e.g., left-to-right or back-to-front scan order).
An illustrative example of the three-view description and the Stage-I (OMS) instruction is provided in Appendix Figure~\ref{fig:3view_description_stage1}.
The model is optimized with standard maximum-likelihood sequence learning to generate the structured three-view representation conditioned on $(I_{\text{ego}}, q)$, yielding the Stage-I SFT model $M_{\text{stage1}}^{\text{SFT}}$.

\textbf{Stage II: View-Grounded Reasoning (VGR).}
Stage II optimizes the view-grounded objective in Eq.~\ref{eq:reasoning}, learning to predict answers by explicitly conditioning on the inferred orthographic views $\hat{\mathcal{V}}$. While $\hat{\mathcal{V}}$ provides strong structural priors, solving challenging tasks (e.g., counting under occlusion or perspective taking) further requires integrating evidence across multiple projections into a coherent 3D mental model.

\begin{table*}[htp]
  \centering
  \caption{Main results on OrthoMind-3D. We report accuracy (\%) for block counting and object reasoning, covering both cardinality queries and attribute-conditioned (attr.) queries, as defined in Section~\ref{subsec:eval}. ``--'' denotes entries where the model cannot produce outputs in the required format for evaluation.}
  \label{tab:main_results}
  \small
  \setlength{\tabcolsep}{4pt}
  \resizebox{1.0\textwidth}{!}{
    \begin{tabular}{lcccccc}
      \toprule
      & \multicolumn{2}{c}{Block Counting} & \multicolumn{4}{c}{Object Reasoning} \\
      \cmidrule(lr){2-3} \cmidrule(lr){4-7}
      \multirow{-2}{*}{Model} & Block Count & Block Count (Attr.) & Object Count & Object Position & Object Count (Attr.) & Object Position (Attr.) \\
      \midrule
      \rowcolor[HTML]{E7E6E6}\multicolumn{7}{l}{Proprietary Models} \\
      GPT-4o & 15.8 & 53.2 & 68.3 & 39.3 & 71.2 & 47.2 \\
      Gemini-2.0-Flash & 18.2 & 54.0 & 69.7 & 62.7 & 86.4 & 62.6 \\
      GPT-5 & 15.8 & 50.7 & 64.0 & 60.3 & 77.6 & 66.0 \\
      Gemini-3-pro & 13.8 & \textbf{80.2} & \textbf{83.3} & \textbf{71.6} & \textbf{93.2} & \textbf{93.6} \\
      Claude-Sonnet-4.5 & \textbf{19.0} & 63.4 & 26.7 & 54.3 & 61.8 & 73.4 \\
      \midrule
      \rowcolor[HTML]{E7E6E6}\multicolumn{7}{l}{Specialized Spatial Reasoning Models} \\
      SpatialLadder-3B & 8.4 & 27.4 & 39.6 & 25.3 & 49.2 & 25.6 \\
      Spatial-MLLM-4B~\cite{wu2025spatial} & 1.8 & 24.8 & 37.7 & -- & 24.4 & -- \\
      SpaceOm-4B~\cite{yin2025spatial} & 10.4 & 47.2 & 63.6 & 17.6 & 60.2 & 25.4 \\
      SpaceQwen2.5-VL-3B-Instruct~\cite{jia2025omnispatial} & 10.6 & 48.6 & 55.0 & 16.0 & 63.0 & 13.4 \\
      \midrule
      \rowcolor[HTML]{E7E6E6}\multicolumn{7}{l}{Open Source Models} \\
      XiaoMiMo-VL-7B-RL & 18.2 & 59.2 & 64.3 & 44.3 & 77.4 & 59.2 \\
      GLM4.1V-9B & 15.0 & 42.5 & 46.6 & 39.0 & 68.6 & 46.2 \\
      InternVL3.5-4B & 10.6 & 53.2 & 51.6 & 40.0 & 68.2 & 50.8 \\
      InternVL3.5-8B & 15.6 & 54.5 & 55.3 & 41.0 & 73.2 & 55.8 \\
      Qwen2.5-VL-7B & 9.2 & 42.1 & 63.3 & 23.6 & 70.0 & 36.4 \\
      Qwen2.5-VL-3B & 10.4 & 47.4 & 58.3 & 16.6 & 60.0 & 16.6 \\
      Qwen3-VL-8B-Instruct & 10.6 & 43.8 & 62.0 & 47.6 & 76.6 & 60.2 \\
      Qwen3-VL-4B-Instruct & 6.2 & 43.4 & 59.0 & 41.0 & 74.8 & 45.6 \\
      \midrule
      3ViewSense-4B-sft (ours) & 33.4 & 63.1 & 97.0 & 91.0 & 95.4 & 91.8 \\
      3ViewSense-4B-rl-strict (ours) & \textbf{95.0} & 88.2 & \textbf{98.7} & \textbf{93.3} & 97.4 & 93.2 \\
      3ViewSense-4B-rl-slack (ours) & 94.4 & \textbf{88.6} & \textbf{98.7} & 92.3 & \textbf{98.4} & \textbf{93.4} \\
      \bottomrule
    \end{tabular}
  }
\end{table*}

\textbf{Stage-II SFT initialization.}

We first perform supervised fine-tuning to teach the model to generate three-view-grounded natural-language reasoning and the final answer.
Specifically, we construct \emph{view-grounded reasoning traces} that follow a human-like integration order (front $\rightarrow$ left $\rightarrow$ top), written in a first-person mental simulation style and avoiding explicit format references (Appendix~\ref{fig:prompt_for_3view_to_nr}).
These traces are generated by a strong teacher model and filtered by the correctness of the final-answer, resulting in an initialized reasoner in SFT $M_{\text{stage2}}^{\text{SFT}}$.

\textbf{GRPO-based RL refinement.}
Starting from $M_{\text{stage2}}^{\text{SFT}}$, we further refine the reasoner with Group Relative Policy Optimization (GRPO)~\cite{shao2024deepseekmath}.
Motivated by recent findings that online RL is less prone to catastrophic forgetting than SFT and can better preserve previously learned capabilities~\cite{shenfeld2025rl}, we adopt GRPO to (i) mitigate potential generalization degradation under large-scale training and (ii) encourage the model to internalize teacher-generated view-grounded traces as its own reasoning process rather than merely imitating surface forms.
Given a context $c$ (including $I_{\text{ego}}, q,$ and $\hat{\mathcal{V}}$), we sample a group of $G$ completions $\{o_i\}_{i=1}^G \sim \pi_{\theta_{\text{old}}}(\cdot\mid c)$ and assign each completion a scalar verified reward $R_i$.
We compute a group-normalized advantage $\hat{A}_i=(R_i-\mu_R)/(\sigma_R+\delta)$, where $\mu_R$ and $\sigma_R$ are the mean and standard deviation of $\{R_j\}_{j=1}^G$, and optimize the clipped objective:
\begingroup
\setlength{\abovedisplayskip}{3pt}
\setlength{\belowdisplayskip}{3pt}
\setlength{\abovedisplayshortskip}{3pt}
\setlength{\belowdisplayshortskip}{3pt}
\begin{equation}
  \scalebox{0.8}{$
    \begin{aligned}
      \mathcal{J}_{\text{GRPO}}(\theta)
      &=\mathbb{E}_{c,\{o_i\}}\Big[\frac{1}{G}\sum_{i=1}^G
        \min\Big(
          r_i(\theta)\hat{A}_i,\;
          \mathrm{clip}\big(r_i(\theta),1-\epsilon,1+\epsilon\big)\hat{A}_i
        \Big)\\
      &\qquad-\beta\,\mathbb{D}_{\mathrm{KL}}\!\left(\pi_\theta\|\pi_{\text{ref}}\right)\Big].
    \end{aligned}
  $}
\end{equation}
\endgroup
where $r_i(\theta)=\pi_{\theta}(o_i\mid c)/\pi_{\theta_{\text{old}}}(o_i\mid c)$.

\textbf{Math-verified reward design.}
For OrthoMind-3D, each ground-truth answer is either an integer count or a discrete relative direction.
We consider two reward configurations.
The strict reward uses exact-match verification, i.e., $R_{\mathrm{strict}}(\hat{a},a)=\mathbb{I}[\hat{a}=a]$, yielding $M_{\mathrm{RL}}^{\mathrm{strict}}$.
The slack reward provides denser feedback for near-correct predictions, yielding $M_{\mathrm{RL}}^{\mathrm{slack}}$.
For counting questions, given prediction $\hat{y}$ and ground truth $y$, we define $R_{\mathrm{count}}(\hat{y},y)=\max\{0,1-0.2|\hat{y}-y|\}$, which assigns rewards $1,0.8,0.6,0.4,0.2,$ and $0$ for absolute errors $0,1,2,3,4,$ and $\geq 5$, respectively.
For direction questions, given prediction $\hat{d}$ and ground truth $d$, we define $R_{\mathrm{dir}}(\hat{d},d)=1$ if $\hat{d}=d$, $0.5$ if $\hat{d}$ and $d$ share at least one directional axis, and $0$ otherwise, where examples such as \textit{left} and \textit{front-left} are treated as partially aligned.

\section{Experiments}
\label{sec:experiments}

\subsection{Experimental Setup}
\label{subsec:exp_setup}

We follow the training framework in Section~\ref{subsec:framework} and train on the In-Domain split of OrthoMind-3D (Section~\ref{subsec:dataset_construction}). Out-of-Domain data are reserved for evaluation. We use Qwen3-VL-4B-Instruct~\cite{qwen3technicalreport} as the base model to evaluate the effectiveness and generalization of 3ViewSense and OrthoMind-3D.

\textbf{Stage I (OMS SFT).}
We fine-tune the model to generate structured orthographic three-view descriptions from a single egocentric input.
In total, we use 19.5k training instances and a detailed breakdown is reported in Appendix~\ref{sub_appendix:exp_settings}.

\textbf{Stage II (VGR SFT).}
We further fine-tune the model to perform view-grounded natural-language reasoning conditioned on the three views and output the final answer.
By comparing the correctness of the answers produced along the reasoning chains, we filter the samples and finally obtain 21k training instances.
The reasoning traces are generated by Gemini-3-Flash using the prompt in Appendix~\ref{fig:prompt_for_3view_to_nr}.

\textbf{RL refinement (GRPO).}
For the RL variants reported in our main results, we further optimize the Stage~II model with GRPO as described in Section~\ref{subsec:framework}.
We use 30k RL instances, including 10k re-sampled from the Stage~II pool and 20k newly generated instances.
We report two reward settings, strict and slack, matching the definitions in Section~\ref{subsec:framework}.

\subsection{Evaluation}
\label{subsec:eval}

\textbf{Evaluation datasets.}
We evaluate on OrthoMind-3D and several public benchmarks.
OrthoMind-3D is split into In-Domain (ID) and Out-of-Domain (OOD) subsets (Section~\ref{subsec:dataset_construction}).
For ID evaluation, we strictly de-duplicate against all training data and then sample a disjoint held-out test set.

We cover two task families (Block Counting and Object Reasoning), and report both cardinality queries and attribute-conditioned queries (e.g., color).
Appendix Table~\ref{tab:eval_splits} summarizes the split composition.

\textbf{Additional benchmarks.}
We further evaluate on several external spatial reasoning benchmarks, including the full MindCube-Tiny subset from MindCube~\cite{yin2025spatial} (1,040 examples), CV-Bench 2D~\cite{tong2024cambrian} (1,438 examples), the counting and relative-position subsets of SPBench-SI~\cite{li2025spatialladder} (306 examples), the Egocentric split of OmniSpatial Perspective-Taking~\cite{jia2025omnispatial} (102 examples), and the Camera Perspective split of ViewSpatial~\cite{li2025viewspatial} (2,769 examples). Details of the filtering and subsampling procedure are provided in the Appendix~\ref{appendix:benchmark_sampling}.

% We additionally evaluate on MindCube~\cite{yin2025spatial} (we use full MindCube-Tiny dataset), CV-Bench 2D~\cite{tong2024cambrian} (1,438), SPBench-SI~\cite{li2025spatialladder} counting/relative-position (306), OmniSpatial~\cite{jia2025omnispatial} Perspective\_Taking (Egocentric, 102), and ViewSpatial~\cite{li2025viewspatial} Camera perspective (2,769). details of the filtering and subsampling procedure can be seen in Appendix~\ref{appendix:benchmark_sampling}.

\textbf{Metric and protocol.}
We report (pass@1) accuracy under identical decoding settings across models, using exact-match after normalization for both integer counting and discrete direction labels.
We compare against proprietary models, specialized spatial reasoning models, and open-source VLMs.

\section{Results \& Analysis}
\label{sec:result_analysis}

\subsection{Main Results}
\label{subsec:main_results}

\textbf{In-domain results on OrthoMind-3D.}
We first evaluate models on the in-domain split (Table~\ref{tab:main_results}) across three groups: proprietary models, specialized spatial reasoning models, and open-source VLMs. Even on the seemingly simple block counting task, most models perform poorly; moreover, cardinality-only counting is substantially harder than attribute-conditioned counting, likely because salient attributes (e.g., color) turn the problem into easier attribute-specific counting rather than true 3D enumeration. Our 3ViewSense model improves over open-source baselines after SFT but remains imperfect on counting, while GRPO refinement with both strict and slack rewards further lifts in-domain performance to consistently high accuracy across tasks, which is expected given the simplicity of OrthoMind-3D queries.

\textbf{OOD generalization and transfer to external benchmarks.}
We further evaluate on the OrthoMind-3D OOD split and other spatial benchmarks (Table~\ref{tab:ood_and_other_benchmark_results}).
3ViewSense exhibits generalization at a similar parameter scale, with clearer gains on block counting and object positioning.
RL refinement further improves over the SFT-only model on OOD tasks, and the slack reward generally performs better than the strict reward.
On external benchmarks, the gains transfer to all five benchmarks, suggesting that the learned three-view reasoning ability is not limited to OrthoMind-3D.

% remove clevr and replace it by mindcube-tiny which is more suitable
\begin{table*}[htp]
  \centering
  \caption{OOD generalization on OrthoMind-3D and comparison on external benchmarks.
  We report accuracy (\%) under the same evaluation protocol as Section~\ref{subsec:eval}.
  {\textcolor{green!60!black}{$\uparrow$}} indicates relative improvement over the Qwen3-VL-4B-Instruct base model; {\textcolor{red}{$\downarrow$}} indicates relative drop.}
  \label{tab:ood_and_other_benchmark_results}
  \small
  \setlength{\tabcolsep}{3pt}
  \resizebox{\textwidth}{!}{
    \begin{tabular}{lcccc@{\hspace{10pt}}c@{\hspace{10pt}}ccccc}
      \toprule
      \multirow{2}{*}{Model}
      & \multicolumn{4}{c}{(A) OrthoMind-3D (OOD Task)}
      &
      & \multicolumn{5}{c}{(B) Other Spatial Benchmarks} \\
      \cmidrule(lr){2-5} \cmidrule(lr){7-11}
      & Block Count & Block Count (Attr.) & Object Count & Object Position
      &
      & MindCube-Tiny & CV-Bench (2D) & OmniSpatial & SPBench (SI) & ViewSpatial \\
      \midrule
      SpatialLadder-3B & 19.6 & 43.0 & 22.2 & 15.6 & & 38.5 & 68.6 & 67.6 & \textbf{30.8} & 36.2 \\
      SpaceOm-4B & 23.4 & 61.7 & 46.2 & 19.2 & & 35.5 & 68.8 & 67.6 & 29.7 & 34.8 \\
      SpaceQwen2.5-VL-3B-Instruct & 21.2 & 62.5 & 51.8 & 16.5 & & 36.4 & 66.1 & 58.8 & 28.1 & 30.1 \\
      InternVL3.5-4B & 33.1 & 65.9 & 54.6 & 34.8 & & 14.4 & 79.4 & 67.7 & 28.4 & 34.1 \\
      Qwen2.5-VL-3B & 26.8 & 58.7 & 47.2 & 20.1 & & 36.1 & 68.7 & 61.7 & 27.1 & 33.4 \\
      Qwen3-VL-4B-Instruct & 21.2 & 57.8 & 50.9 & 46.7 & & 27.2 & 77.9 & 58.1 & 22.2 & 35.5 \\
      Qwen3-VL-8B-Instruct & 25.5 & 64.7 & \textbf{56.5} & 55.0 & & 36.2 & \textbf{81.1} & \textbf{73.5} & 22.8 & \textbf{38.7} \\
      \midrule
      3ViewSense-4B-sft (ours) & 31.1\;({\textcolor{green!60!black}{$\uparrow$46.7\%}}) & 62.1\;({\textcolor{green!60!black}{$\uparrow$7.4\%}}) & 32.4\;({\textcolor{red}{$\downarrow$36.3\%}}) & 72.5\;({\textcolor{green!60!black}{$\uparrow$55.2\%}}) & & 34.9\;({\textcolor{green!60!black}{$\uparrow$28.3\%}}) & 74.3\;({\textcolor{red}{$\downarrow$4.6\%}}) & 56.7\;({\textcolor{red}{$\downarrow$2.4\%}}) & 20.6\;({\textcolor{red}{$\downarrow$7.2\%}}) & 34.4\;({\textcolor{red}{$\downarrow$3.1\%}}) \\
      3ViewSense-4B-rl-strict (ours) & 33.2\;({\textcolor{green!60!black}{$\uparrow$56.6\%}}) & \textbf{71.1}\;({\textcolor{green!60!black}{$\uparrow$23.0\%}}) & 49.1\;({\textcolor{red}{$\downarrow$3.5\%}}) & 74.3\;({\textcolor{green!60!black}{$\uparrow$59.1\%}}) & & 36.7\;({\textcolor{green!60!black}{$\uparrow$34.9\%}}) & 78.1\;({\textcolor{green!60!black}{$\uparrow$0.3\%}}) & 59.1\;({\textcolor{green!60!black}{$\uparrow$1.7\%}}) & 23.2\;({\textcolor{green!60!black}{$\uparrow$4.5\%}}) & 36.6\;({\textcolor{green!60!black}{$\uparrow$3.1\%}}) \\
      3ViewSense-4B-rl-slack (ours) & \textbf{38.7}\;({\textcolor{green!60!black}{$\uparrow$82.5\%}}) & 70.2\;({\textcolor{green!60!black}{$\uparrow$21.5\%}}) & 50.9\;(0.0\%) & \textbf{76.1}\;({\textcolor{green!60!black}{$\uparrow$63.0\%}}) & & \textbf{38.9}\;({\textcolor{green!60!black}{$\uparrow$43.0\%}}) & 79.9\;({\textcolor{green!60!black}{$\uparrow$2.6\%}}) & 62.8\;({\textcolor{green!60!black}{$\uparrow$8.1\%}}) & 25.4\;({\textcolor{green!60!black}{$\uparrow$14.4\%}}) & 37.1\;({\textcolor{green!60!black}{$\uparrow$4.5\%}}) \\
      \bottomrule
    \end{tabular}
  }
\end{table*}

\subsection{In-depth Analysis}
\label{subsec:indepth_analysis}

\textbf{In-Context Learning Analysis.}
We ask whether a model can acquire three-view mental reasoning purely from a few in-context demonstrations, without any parameter updates.
Using the few-shot instruction and teaching examples in Appendix~\ref{sub_appendix:prompt}, we evaluate multiple models on the OrthoMind-3D in-domain test set; results are summarized in Figure~\ref{fig:icl_and_view_description_study} (A).
We observe that only the strongest proprietary models show limited improvements under ICL, while most open-source VLMs degrade.
This suggests that three-view reasoning is not merely a prompt-following skill: without an internalized view-consistent representation, the model struggles to reliably translate egocentric visual cues into orthographic constraints, and the additional steps introduced by ICL may amplify rather than reduce such misalignment.

\textbf{Explicit Three-View Description Analysis.}
Next, we test whether providing an explicit orthographic three-view description can directly improve spatial reasoning, as shown in Figure~\ref{fig:icl_and_view_description_study} (B).
Across models, injecting the three-view description yields substantial gains, especially on occlusion-heavy block counting, indicating that many models possess sufficient symbolic capacity once the spatial structure is exposed through a view-consistent interface.
While the effectiveness varies across models and tasks, these differences suggest that the benefit of external spatial cues depends on a model's ability to coherently integrate multi-view information.
Overall, the results support our core insight: the primary bottleneck lies in the absence of a stable intermediate spatial representation, motivating 3ViewSense to explicitly learn to induce and integrate orthographic views rather than relying on prompt-only adaptation or external annotations at inference time.

\begin{figure}[htp]
  \centering
  \includegraphics[width=1.0\linewidth]{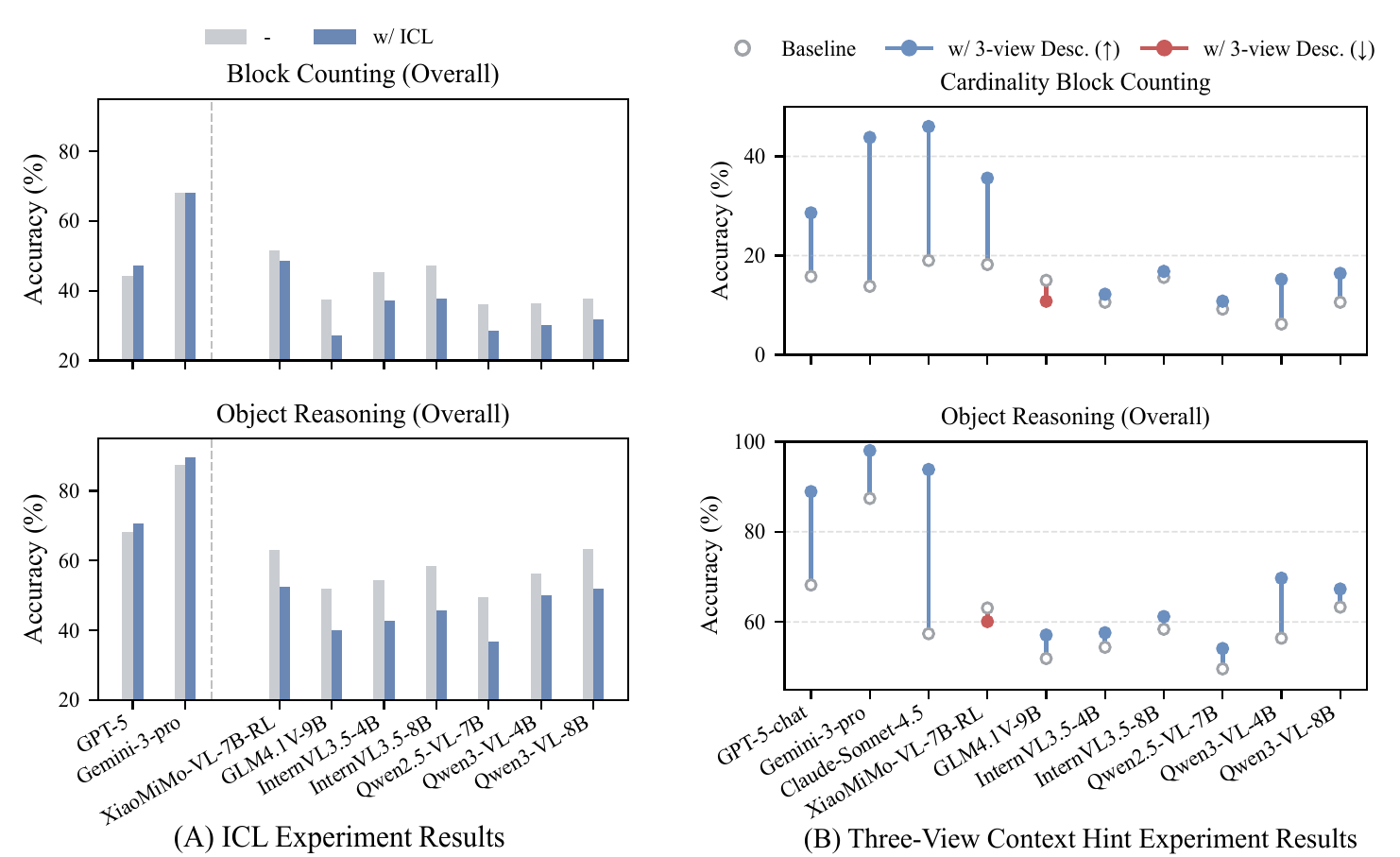}
  \caption{In-context learning (ICL) and explicit orthographic three-view description study on OrthoMind-3D (in-domain). ICL yields limited improvements only for the strongest proprietary models, while explicit three-view descriptions substantially improve performance for most models, supporting the need for a view-consistent intermediate representation.}
  \label{fig:icl_and_view_description_study}
\end{figure}

\textbf{Model Response Analysis.}
\Cref{tab:avg_answer_length} shows that the base model exhibits extreme verbosity on block counting (e.g., $>10\text{k}$ tokens), suggesting that, without explicit spatial representation, the model tends to revisit uncertain spatial hypotheses, which can cause drift and hallucinated intermediate states and eventually hurt accuracy.
In contrast, 3ViewSense consistently produces concise outputs by guiding the model to reason from an engineering-inspired three-view perspective: it first forms view-consistent orthographic mental sketches and then composes the final answer, reducing ambiguity and redundant deliberation. This qualitative behavior is further illustrated in Appendix~\ref{appendix:response_examples} (Figure~\ref{fig:model_response_examples}).

\begin{table}[t]
  \centering
  \caption{Average response length (tokens) across benchmarks. For this simple task, the base model tends to produce overly verbose reasoning (over thinking) and can even suffer reduced accuracy; 3ViewSense substantially reduces this redundancy.}
  \label{tab:avg_answer_length}
  \small
  \setlength{\tabcolsep}{3pt}
  \resizebox{\linewidth}{!}{%
    \begin{tabular}{lrrrr}
      \toprule
      Model & Block Count & Block Count (Attr.) & SPBench-SI & ViewSpatial \\
      \midrule
      Qwen3-VL-4B-Instruct & 10218.9 & 6531.1 & 451.6 & 952.1 \\
      3ViewSense-4B-sft & 350.4 & 375.2 & 250.1 & 261.7 \\
      3ViewSense-4B-rl-strict & 377.1 & 378.9 &  273.5 & 266.6 \\
      3ViewSense-4B-rl-slack & 375.2 & 389.4 & 281.5 & 270.9 \\
      \bottomrule
    \end{tabular}%
  }
\end{table}

\subsection{Ablation Study}
\label{subsec:ablation}

\textbf{Direct QA vs. 3ViewSense reasoning.}
We examine whether the gains come merely from supervised data or from the proposed view-guided reasoning process.
Specifically, we construct a Direct QA variant with the same training inputs, instances, and hyperparameters as VGR-SFT, but use only the final answer as the target output.
This removes intermediate view-guided reasoning supervision while preserving answer-level supervision.
As shown in Table~\ref{tab:reasoning_content_ablation}, Direct QA performs better on OrthoMind-3D, suggesting stronger fitting to the target dataset.
However, its performance drops sharply on external benchmarks, achieving only 1.3 on SPBench-SI and 7.2 on ViewSpatial.
In contrast, 3ViewSense reasoning generalizes substantially better across benchmarks.
These results suggest that the improvement is not solely attributable to supervised data; the view-guided reasoning process plays an important role in transferable spatial reasoning.

\begin{table}[t]
\centering
\caption{Ablation on the SFT target.
Direct QA uses the same inputs, instances, and hyperparameters as VGR-SFT, but supervises only the final answer.
We report average accuracy (\%) on OrthoMind-3D (In-Domain and Out-of-Domain) and two external benchmarks (SPBench-SI and ViewSpatial).}
\label{tab:reasoning_content_ablation}
\small
\setlength{\tabcolsep}{4pt}
\resizebox{0.9\linewidth}{!}{
    \begin{tabular}{lcccc}
    \toprule
    \multirow{2}{*}{SFT Target} & \multicolumn{2}{c}{OrthoMind-3D} & \multirow{2}{*}{SPBench-SI} & \multirow{2}{*}{ViewSpatial} \\ 
    \cmidrule(lr){2-3}
     & ID & OOD &  &  \\
     \midrule
    Direct QA & \textbf{80.3} & \textbf{49.8} & 1.3 & 7.2 \\
    3ViewSense Reasoning & 70.3 & 46.6 & \textbf{21.2} & \textbf{34.1} \\
    \bottomrule
    \end{tabular}
}
\end{table}

% \begin{table}[t]
% \centering
% \caption{.......}
% \label{tab:reasoning_content_ablation}
% \small
% \setlength{\tabcolsep}{4pt}
% \resizebox{0.9\linewidth}{!}{
%     \begin{tabular}{ccccc}
%     \toprule
%     \multirow{2}{*}{Training Data} & \multicolumn{2}{c}{OrthoMind-3D} & \multirow{2}{*}{SPBench-SI} & \multirow{2}{*}{ViewSpatial} \\ 
%     \cmidrule(lr){2-3}
%      & ID & OOD &  &  \\
%      \midrule
%     Direct QA & \textbf{80.3} & \textbf{49.8} & 1.3 & 7.2 \\
%     VGR-SFT & 70.3 & 46.6 & \textbf{21.2} & \textbf{34.1} \\
%     \bottomrule
%     \end{tabular}
% }
% \end{table}

% \textbf{SFT stage ablation.}
% Table~\ref{tab:sft_stage_ablation} shows that OMS-SFT alone is insufficient for solving downstream spatial queries, whereas VGR-SFT brings substantial improvements by explicitly learning to integrate the induced views for answer prediction; importantly, the full two-stage SFT (OMS$\rightarrow$VGR) achieves the best overall performance and yields consistent gains over VGR-SFT only (notably on OrthoMind-3D OOD: 46.6$\rightarrow$48.5, SPBench-SI: 50.2$\rightarrow$52.6, and ViewSpatial: 68.8$\rightarrow$71.9), indicating that OMS provides a more stable view-induction interface that improves robustness and generalization under domain shift.

\textbf{SFT stage ablation.}
Table~\ref{tab:sft_stage_ablation} evaluates the two-stage SFT design.
OMS-SFT alone yields limited performance, indicating that orthographic sketch induction by itself is insufficient for downstream spatial queries.
VGR-SFT brings substantial gains by learning to integrate the induced views for answer prediction.
The full two-stage SFT further improves over VGR-SFT alone across all benchmarks, especially on OrthoMind-3D, suggesting that OMS-SFT provides a useful view-induction initialization that complements answer-oriented VGR training.

\begin{table}[t]
  \centering
  \caption{Ablation study on the two-stage SFT design.
  We report accuracy (\%) on OrthoMind-3D (In-Domain and Out-of-Domain) and two external benchmarks (MindCube-Tiny and ViewSpatial).}
  \label{tab:sft_stage_ablation}
  \small
  \setlength{\tabcolsep}{4pt}
  \resizebox{\linewidth}{!}{
    \begin{tabular}{lcccc}
      \toprule
      \multirow{2}{*}{Stage} & \multicolumn{2}{c}{OrthoMind-3D} & \multirow{2}{*}{MindCube-Tiny} & \multirow{2}{*}{ViewSpatial} \\
      \cmidrule(lr){2-3}
      & ID & OOD & & \\
      \midrule
      OMS-SFT (only) & 48.7 & 41.3 & 29.6 & 33.4 \\
      VGR-SFT (only) & 70.3 & 46.6 & 32.4 & 34.1 \\
      Two-stage SFT (OMS$\rightarrow$VGR) & \textbf{78.6} & \textbf{49.5} & \textbf{34.9} & \textbf{34.4} \\
      \bottomrule
    \end{tabular}
  }
\end{table}

\textbf{RL stage analysis.}
Figure~\ref{fig:rl_ablation_curve} shows the cumulative reward curves during GRPO training under two initialization settings. Directly launching RL from the Stage~I OMS-SFT checkpoint leads to high-variance reward oscillations without a sustained upward trend, reflecting unstable, format- or heuristic-driven exploration that often results in training collapse and degraded downstream performance. In contrast, initializing from the Stage~II VGR-SFT model yields a steadily increasing reward with substantially reduced variance, indicating that view-grounded SFT provides a critical inductive bias: the policy already conditions on inferred orthographic views and produces verifiable intermediate reasoning, allowing RL to consistently reinforce correct spatial behavior. This ablation shows that while OMS SFT is important, a dedicated view-grounded warm-start is essential for stable and effective RL optimization.

\begin{figure}[t]
  \centering
  \includegraphics[width=0.88\linewidth]{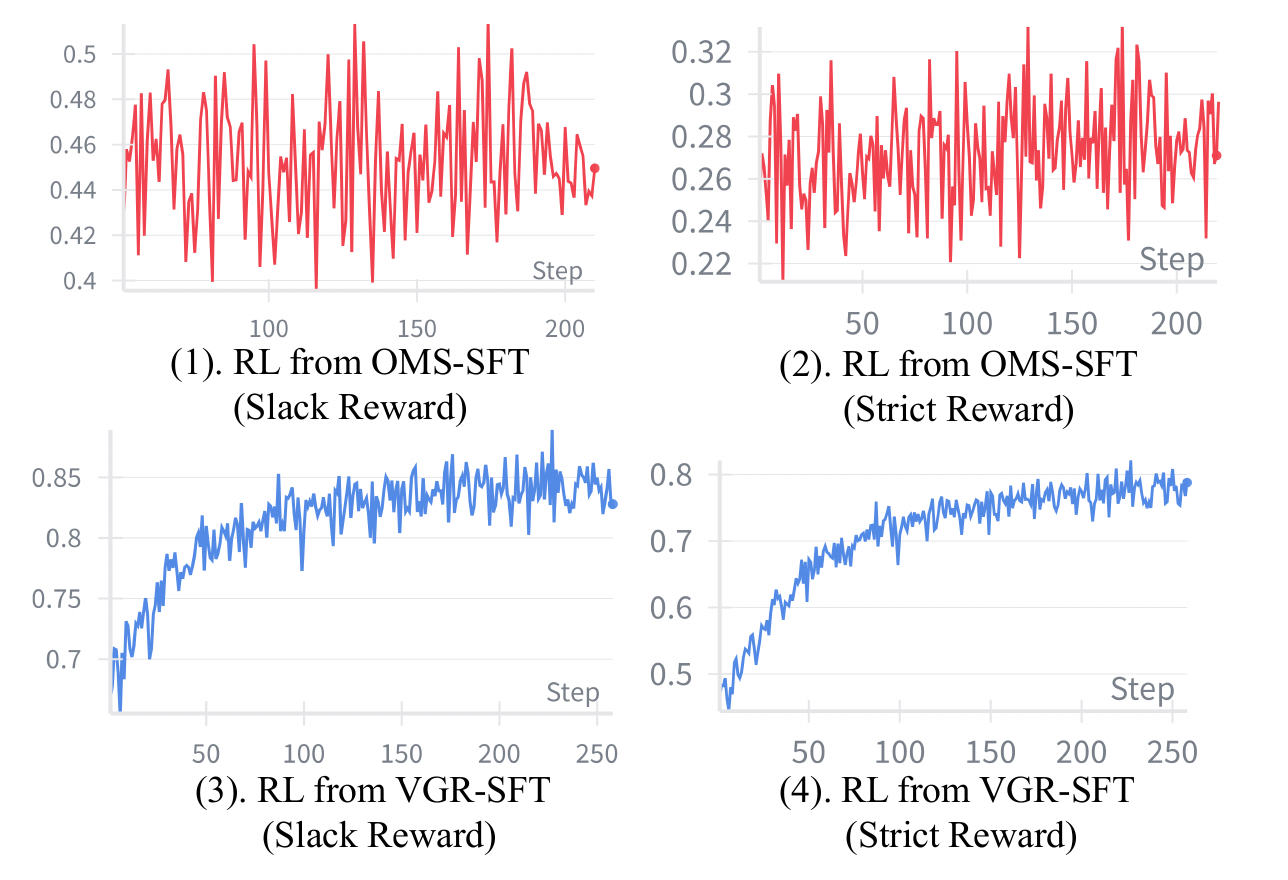}
  \caption{RL ablation on initialization. We compare GRPO reward trajectories when starting RL from the Stage~I OMS-SFT model versus from the Stage~II VGR-SFT model.}
  \label{fig:rl_ablation_curve}
\end{figure}

\section{Limitation and Future Work}
\label{sec:limit_future}

Our core insight is that many VLM failures on spatial reasoning stem from the lack of a view-consistent intermediate representation, and our contributions are OrthoMind-3D for diagnosing these failure modes and 3ViewSense for learning a simulate-and-reason pipeline that induces orthographic views for view-grounded reasoning. The main limitation is that not all spatial reasoning problems are well captured by three orthographic views alone, since many tasks require additional physical and semantic priors (e.g., support relations, affordances, and dynamics) beyond geometry. Future work will focus on learning mechanisms that estimate view-induction uncertainty and adaptively decide when to invoke orthographic mental simulation, expanding the intermediate representation beyond three fixed views via task-dependent or hybrid spatial abstractions, and integrating view-grounded reasoning into larger general-purpose multimodal models with continual learning to preserve the induced reasoning behavior while reducing catastrophic forgetting.

\section{Conclusion}
\label{sec:conclusion}

Vision-language models often fail on spatial tasks like block counting under occlusion, suggesting a spatial intelligence gap caused by the lack of a stable, view-consistent representation. We propose 3ViewSense, a simulate-and-reason framework that induces orthographic views and performs view-grounded reasoning, and we introduce OrthoMind-3D to diagnose occlusion-heavy counting and object reasoning. Across in-domain, out-of-domain, and external benchmarks, 3ViewSense yields accuracy gains and better robustness and conciseness, with additional improvements from RL refinement. Nonetheless, orthographic mental simulation is not universally sufficient, and performance can degrade when view induction is unreliable or when tasks require richer priors beyond geometric projections. Future work will extend view-consistent reasoning to more open-world scenes and more complex interactions, and explore how models can adaptively choose such structured representations.

\section*{Impact Statement}

This work aims to advance spatial reasoning in vision-language models through a diagnostic benchmark and a view-grounded reasoning framework.
Potential positive impacts include improving the reliability of multimodal systems in applications that require spatial understanding, such as embodied AI, robotics, assistive perception, and visual question answering.
Potential risks arise when stronger spatial reasoning models are deployed in safety-sensitive physical environments, where incorrect predictions may affect downstream actions or human decisions.
Our benchmark uses synthetic and controlled scenes, which reduces direct privacy concerns, but it does not eliminate the need for careful evaluation before open-world deployment.

% \section*{Impact Statement}

% Authors are \textbf{required} to include a statement of the potential broader
% impact of their work, including its ethical aspects and future societal
% consequences. This statement should be in an unnumbered section at the end of
% the paper (co-located with Acknowledgements -- the two may appear in either
% order, but both must be before References), and does not count toward the paper
% page limit. In many cases, where the ethical impacts and expected societal
% implications are those that are well established when advancing the field of
% Machine Learning, substantial discussion is not required, and a simple
% statement such as the following will suffice:

% This paper presents work whose goal is to advance the field of Machine
% Learning. There are many potential societal consequences of our work, none
% which we feel must be specifically highlighted here.

% The above statement can be used verbatim in such cases, but we encourage
% authors to think about whether there is content which does warrant further
% discussion, as this statement will be apparent if the paper is later flagged
% for ethics review.

% In the unusual situation where you want a paper to appear in the
% references without citing it in the main text, use \nocite
% \nocite{langley00}

\bibliography{example_paper}
\bibliographystyle{icml2026}

%%%%%%%%%%%%%%%%%%%%%%%%%%%%%%%%%%%%%%%%%%%%%%%%%%%%%%%%%%%%%%%%%%%%%%%%%%%%%%%
%%%%%%%%%%%%%%%%%%%%%%%%%%%%%%%%%%%%%%%%%%%%%%%%%%%%%%%%%%%%%%%%%%%%%%%%%%%%%%%
% APPENDIX
%%%%%%%%%%%%%%%%%%%%%%%%%%%%%%%%%%%%%%%%%%%%%%%%%%%%%%%%%%%%%%%%%%%%%%%%%%%%%%%
%%%%%%%%%%%%%%%%%%%%%%%%%%%%%%%%%%%%%%%%%%%%%%%%%%%%%%%%%%%%%%%%%%%%%%%%%%%%%%%
\newpage
\appendix
\onecolumn

% 各个阶段详细的 Prompt
% 实验与训练Settings
% 详细的充分必要条件的证明过程
% 取一些样例进行数据集演示

\section{Preliminary Analysis}
\label{appendix:preliminary}

\subsection{Uniqueness Conditions for Three-View Counting}
\label{appendix:proof_of_uniqueness}

\begin{definition}[Notation]
  For a $W \times L$ grid, let $H_{x,y} \in \mathbb{N}$ be the height at position $(x,y)$. Define:
  \begin{align*}
    M^c_x &= \max_{y'} H_{x,y'} \quad \text{(front view)}, \\
    M^r_y &= \max_{x'} H_{x',y} \quad \text{(side view)}, \\
    O^c_{x,y} &= \max_{y' \neq y} H_{x,y'} \quad \text{(others in same column)}, \\
    O^r_{x,y} &= \max_{x' \neq x} H_{x',y} \quad \text{(others in same row)}.
  \end{align*}
  The three views are $\mathcal{V}(H) = ( \{H_{x,y}>0\}, \{M^c_x\}, \{M^r_y\} )$.
\end{definition}

\begin{theorem}[Uniqueness]
  A configuration $H$ is uniquely determined by $\mathcal{V}(H)$ if and only if for every $(x,y)$ with $H_{x,y}>0$,
  \begin{equation} \label{eq:condition}
    (H_{x,y}=1 \land (M^c_x=1 \lor M^r_y=1)) \lor (H_{x,y}>1 \land (H_{x,y}>O^c_{x,y} \lor H_{x,y}>O^r_{x,y})).
  \end{equation}
\end{theorem}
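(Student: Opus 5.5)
The proof is a direct two-directional argument. Throughout, $S=\{(x,y):H_{x,y}>0\}$ denotes the top-view support, and we use the observation that any $H'$ with $\mathcal{V}(H')=\mathcal{V}(H)$ has the same support $S$ and satisfies $1\le H'_{x,y}\le \min(M^c_x,M^r_y)$ on $S$. We also use that $M^c_x=\max(H_{x,y},O^c_{x,y})$ and $M^r_y=\max(H_{x,y},O^r_{x,y})$ for every $(x,y)$.

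\emph{Necessity (contrapositive).} Suppose some occupied $(x,y)$ violates \eqref{eq:condition}. In each case we build a second configuration $H'\neq H$, differing from $H$ only at $(x,y)$, with $\mathcal{V}(H')=\mathcal{V}(H)$.
\begin{itemize}
\item If $H_{x,y}=1$ with $M^c_x\ge 2$ and $M^r_y\ge 2$, set $H'_{x,y}=2$. Since $2\le\min(M^c_x,M^r_y)$, both line maxima are unchanged, and the support is unchanged.
\item If $H_{x,y}>1$ with $H_{x,y}\le O^c_{x,y}$ and $H_{x,y}\le O^r_{x,y}$, set $H'_{x,y}=H_{x,y}-1\ge 1$. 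The support is unchanged. The column maximum is still $O^c_{x,y}=M^c_x$ and the row maximum is still $O^r_{x,y}=M^r_y$, because those maxima are attained elsewhere in the line.
\end{itemize}
So $H$ is not uniquely determined.

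\emph{Sufficiency.} Assume \eqref{eq:condition} holds everywhere, and let $H'$ share the views of $H$.

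First we show $H'\le H$ pointwise on $S$.
\begin{itemize}
\item If $H_{x,y}=1$, then $M^c_x=1$ or $M^r_y=1$, so $H'_{x,y}\le 1=H_{x,y}$.
\item If $H_{x,y}>1$, then $H_{x,y}>O^c_{x,y}$ or $H_{x,y}>O^r_{x,y}$. Say the column case holds. Then $H_{x,y}$ is the strict unique maximum of column $x$, so $H_{x,y}=M^c_x\ge H'_{x,y}$. The row case is symmetric.
\end{itemize}

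Next we upgrade to equality, which is the one step needing care. Cells with $H=1$ are already fixed, since $H'\ge 1$ on $S$. Take a cell with $H_{x,y}>1$ that is the strict maximum of its column (the row case is symmetric). The view $\mathcal{V}(H')$ requires $M^c_x$ to be attained somewhere in column $x$ by $H'$. For $y'\neq y$ we have $H'_{x,y'}\le H_{x,y'}\le O^c_{x,y}<M^c_x$. Hence the maximum must be attained at $(x,y)$, giving $H'_{x,y}=M^c_x=H_{x,y}$.

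Thus $H'=H$, proving uniqueness. The main obstacle is this last equality step: attainment of the maxima must be combined with the pointwise bound $H'\le H$, rather than trying to argue cell by cell directly.
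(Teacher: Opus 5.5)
Your proof is correct. It shares the paper's outline: necessity by contrapositive via a local perturbation, and sufficiency by showing every cell is determined. At each key step, though, your argument differs from the paper's, and yours is the one that actually works.

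\textbf{Necessity.} The paper's first construction sets $H'_{x_0,y_0}=0$. That removes $(x_0,y_0)$ from the support, so it changes the top view rather than preserving it. Its second construction lowers the cell by one and raises another cell in the same line, which can change that other cell's row or column maximum, or the support. Your single-cell modifications avoid both problems. You raise the isolated $1$ to $2$, or lower by $1$ a cell that is weakly dominated in both its row and its column. Only column $x$ and row $y$ are touched, and in each case the original line maximum is still attained at another cell.

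\textbf{Sufficiency.} The paper only observes that $H_{x,y}=M^c_x$ (or $M^r_y$) holds in $H$, and then declares each cell determined. That does not exclude an alternative $H'$ with the same views that attains the column maximum at a different cell. Your two-step argument closes exactly this gap. First you get $H'\le H$ pointwise, using the condition at every occupied cell. Then attainment of $M^c_x$ in column $x$ forces $H'_{x,y}=H_{x,y}$.

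The net effect is a complete proof where the paper gives only a sketch.
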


\begin{proof}

  \textbf{1. Sufficiency ($\Rightarrow$).} Assume Eq.\eqref{eq:condition} holds. For any $(x,y)$ with $H_{x,y}>0$:
  \begin{itemize}
    \item If $H_{x,y}=1$ and $M^c_x=1$, then all nonzero cells in column $x$ must be 1. Thus $H_{x,y}$ is forced to 1. The case $M^r_y=1$ is symmetric.
    \item If $H_{x,y}>1$ and $H_{x,y}>O^c_{x,y}$, then $H_{x,y}$ is the unique maximum in its column, so $H_{x,y}=M^c_x$. The case $H_{x,y}>O^r_{x,y}$ symmetrically forces $H_{x,y}=M^r_y$.
  \end{itemize}
  Thus each $H_{x,y}$ is uniquely determined by the views, implying uniqueness of the entire configuration.

  \textbf{2. Necessity ($\Leftarrow$).} We prove the contrapositive. Suppose there exists $(x_0,y_0)$ violating Eq.\eqref{eq:condition}. Then $H_{x_0,y_0}>0$ and:
  \begin{itemize}
    \item Either $H_{x_0,y_0}=1$ with $M^c_{x_0}>1$ and $M^r_{y_0}>1$, or
    \item $H_{x_0,y_0}>1$ with $H_{x_0,y_0} \leq O^c_{x_0,y_0}$ and $H_{x_0,y_0} \leq O^r_{x_0,y_0}$.
  \end{itemize}
  In both cases, we can construct an alternative configuration $H'$ with the same views:
  \begin{itemize}
    \item In the first case, choose $y_1$ such that $H_{x_0,y_1}=M^c_{x_0}>1$. Let $H'_{x_0,y_0}=0$, $H'_{x_0,y_1}=M^c_{x_0}$, and keep other heights unchanged. Adjustments can be made to maintain $M^r_{y_0}$ (e.g., by increasing another cell in row $y_0$).
    \item In the second case, since $H_{x_0,y_0}$ is not a unique maximum in its row or column, we can decrease it by 1 and increase another cell in the same row/column without changing $M^c_{x_0}$ or $M^r_{y_0}$.
  \end{itemize}
  In either construction, $\mathcal{V}(H') = \mathcal{V}(H)$ but $H' \neq H$, contradicting uniqueness.
\end{proof}

\begin{corollary}
  If condition Eq.\eqref{eq:condition} holds, the total block count $\sum_{x,y} H_{x,y}$ is uniquely determined by the three views.
\end{corollary}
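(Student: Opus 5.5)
The corollary follows directly from the Uniqueness theorem stated just above, so we do not need any new geometric argument. Our plan is to reduce the claim about the total count to the claim about the full configuration. Since the total count is a function of the configuration, it is determined as soon as the configuration is.

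Concretely, fix a configuration $H$ that satisfies Eq.~\eqref{eq:condition} at every occupied cell. By the sufficiency direction of the theorem, $H$ is the only configuration $H'$ with $\mathcal{V}(H')=\mathcal{V}(H)$. In other words, the fibre of the map $H' \mapsto \mathcal{V}(H')$ over $\mathcal{V}(H)$ is the singleton $\{H\}$. Let $S(H')=\sum_{x,y}H'_{x,y}$ denote the total block count. Restricted to that fibre, $S$ takes the single value $S(H)$. So any observer who knows only the three views and knows that the scene satisfies the condition recovers $\sum_{x,y}H_{x,y}$ unambiguously.

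To avoid reliance on the global uniqueness statement, we can alternatively argue cell by cell, reusing the case analysis of the sufficiency proof.
\begin{itemize}
\item The top view $\{H_{x,y}>0\}$ tells us which cells are occupied; unoccupied cells contribute $0$.
\item For an occupied cell in Case I, $H_{x,y}=1$.
\item For an occupied cell in Case II, $H_{x,y}$ equals the unique column maximum $M^c_x$ or the unique row maximum $M^r_y$.
\end{itemize}
Each summand is therefore a value read off from $\mathcal{V}(H)$, and so is the sum.

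The only delicate point is interpretive rather than technical. The condition refers to $H$ itself, so ``determined by the views'' must be read as determined among configurations sharing these views. In the cell-wise version, one must also check that the choice between Case I and Case II, and between the row and column alternatives, is itself readable from the views or else irrelevant. It is irrelevant: by the theorem, no other configuration shares $\mathcal{V}(H)$, so any consistent reading returns the same value. We would state this reading explicitly and then conclude in one line from the theorem.
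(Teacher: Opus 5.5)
Your proposal is correct and matches the paper. The paper gives no separate argument; it treats the corollary as immediate from the Uniqueness theorem, since $H$ is the only configuration with views $\mathcal{V}(H)$ and so its sum is fixed by them. In your cell-wise variant, the ``delicate point'' about which case or alternative applies disappears once you note that every case forces $H_{x,y}=\min(M^c_x,M^r_y)$, which gives the explicit count $\sum_{H_{x,y}>0}\min(M^c_x,M^r_y)$; ruling out an arbitrary $H'$ with the same views (one that may violate the condition) still needs the theorem.
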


\section{Additional Method Details}
\label{appendix:method_details}

\subsection{Datasets Details}
\label{sub_appendix:dataset_details}

This section provides qualitative examples and annotation formats for OrthoMind-3D.
Figure~\ref{fig:dataset_examples} shows representative samples from the in-domain and out-of-domain subsets, covering both block counting and object reasoning.
Figure~\ref{fig:3view_description_stage1} illustrates the orthographic three-view description format and the Stage-I (OMS) instruction used for supervised fine-tuning.

\begin{figure*}[t]
  \begin{center}
    \centerline{\includegraphics[width=0.85\textwidth]{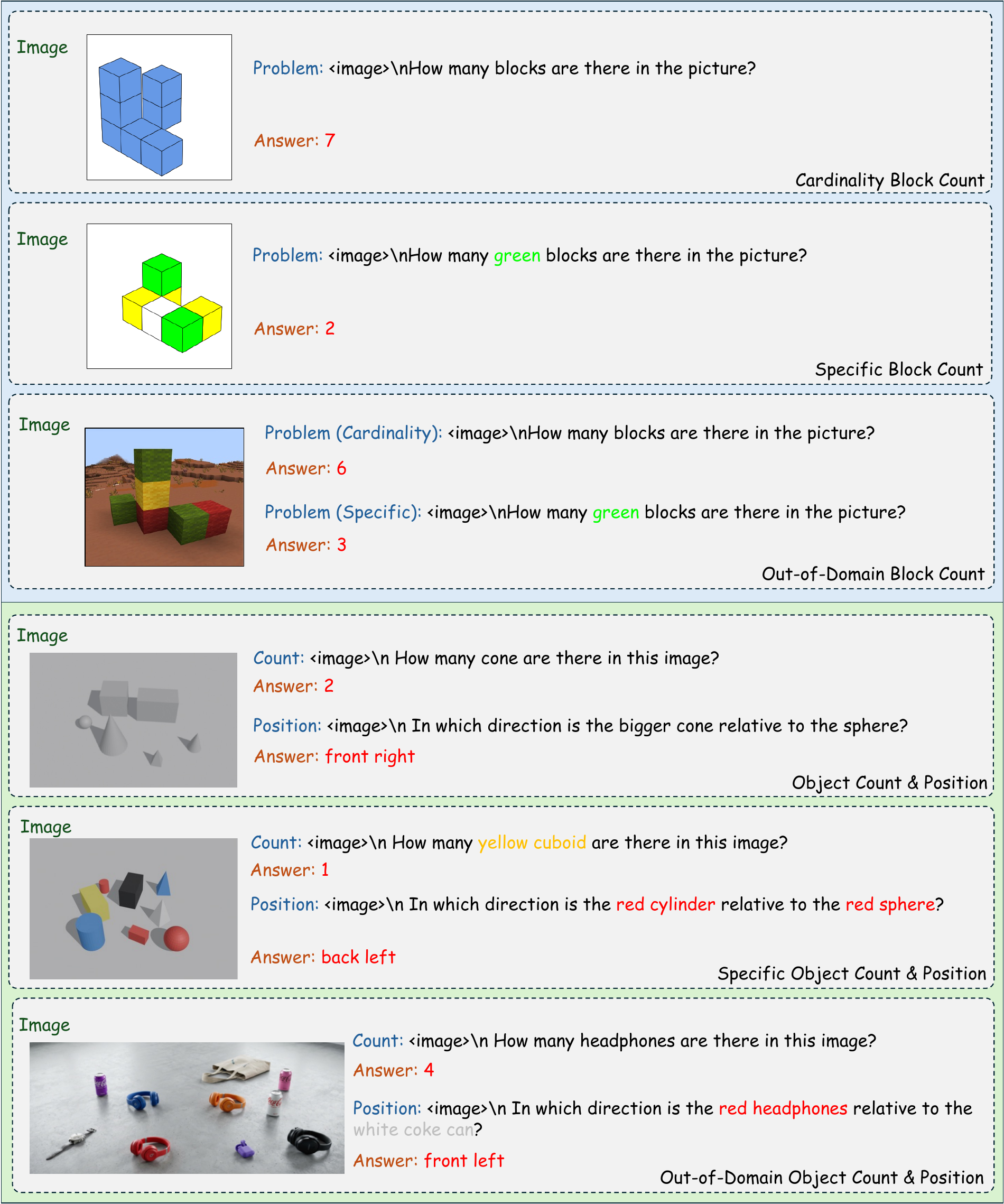}}
    \caption{
      Sample visualization of the OrthoMind-3D dataset.
      We show examples from the in-domain and out-of-domain subsets for two task families: block counting (top) and object reasoning (bottom).
      In-domain data are generated with strict geometric constraints, while out-of-domain data are photorealistic and less structured to evaluate generalization.
    }
    \label{fig:dataset_examples}
  \end{center}
\end{figure*}

\begin{figure*}[t]
  \begin{center}
    \centerline{\includegraphics[width=0.95\textwidth]{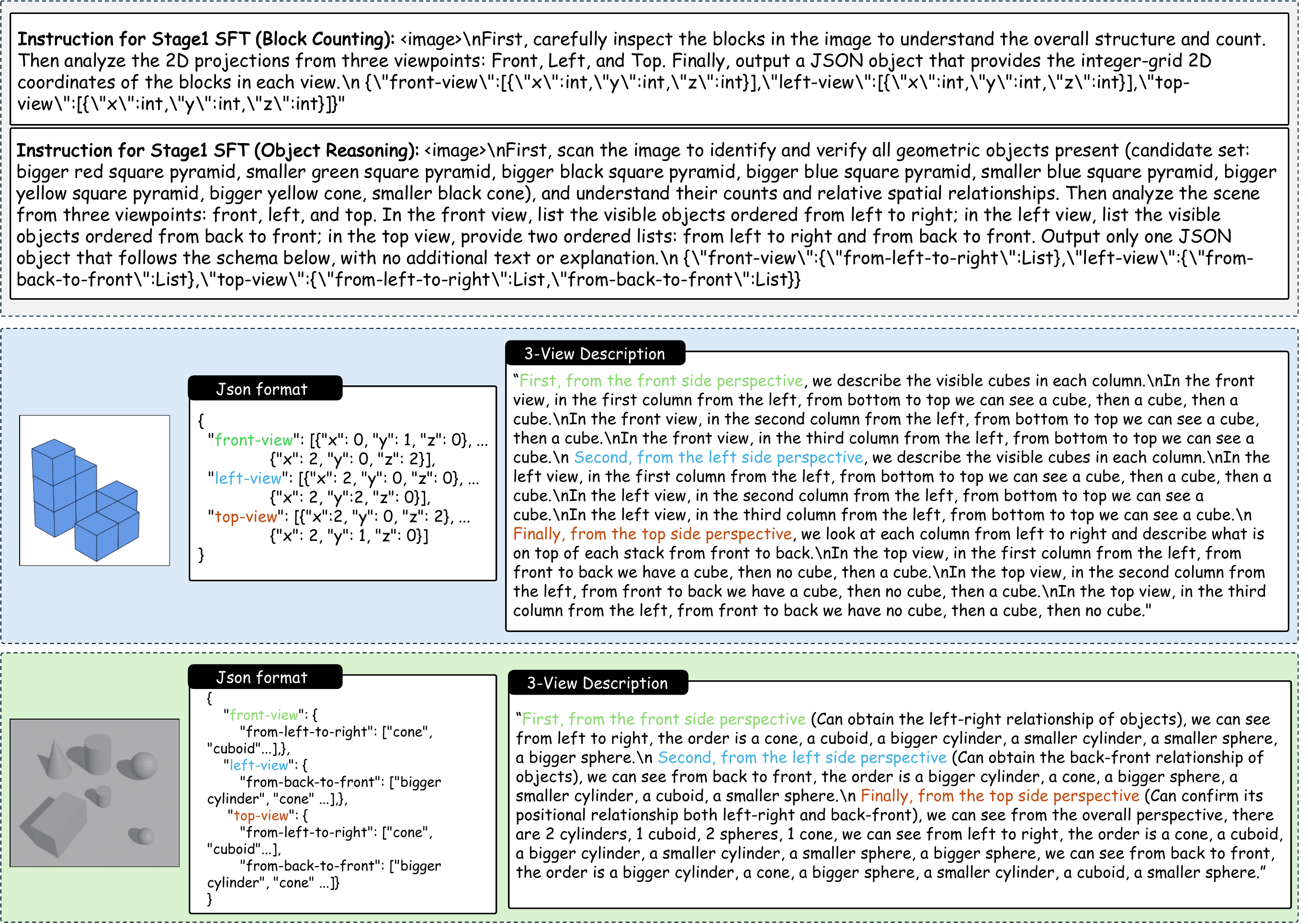}}
    \caption{
      Example of the orthographic three-view description and the Stage-I (OMS) instruction used for training.
    }
    \label{fig:3view_description_stage1}
  \end{center}
\end{figure*}

\subsection{Prompt Template}
\label{sub_appendix:prompt}

This section summarizes the prompt templates used in our data generation and evaluation.
Figure~\ref{fig:prompt_for_aigc} presents the template for synthesizing photorealistic multi-object scenes with controllable spatial layouts.
Figure~\ref{fig:prompt_for_3view_to_nr} provides the template for converting orthographic three-view descriptions into natural-language reasoning traces for Stage-II supervision.
Figure~\ref{fig:icl_instruction_and_block_counting} and Figure~\ref{fig:shared_instruction} list the instructions and teaching demonstrations used in the in-context learning study.

\begin{figure*}[htp]
  \begin{tcolorbox}[title=A. Prompt for Generative AI-based Randomized Spatial Object Placement, coltitle=white, colback=blue!3!white, colframe=blue!80!black, colbacktitle=blue!80!black, sharp corners=southwest, boxrule=0.8pt]

    \small
    \textbf{1. Systematic Prompt Template}
    \begin{quote}
      A wide-angle photorealistic 3D render of \textbf{[Selected Objects]} haphazardly scattered across a \textbf{[Selected Environment]}. The scene features significant negative space around objects. Items are placed on the same horizontal plane (floor) but with random orientations. Each object is strictly isolated, no overlapping, no touching, widely spaced apart. High angle shot, depth of field, soft volumetric lighting, Octane render, 8k resolution, ultra-detailed.
    \end{quote}

    \medskip
    \textbf{2. Definition of Hyperparameters}

    \begin{itemize}
      \item \textbf{[Selected Objects]}: A randomized set of $N$ items (where $2 \le N \le 6$) selected from a predefined library. Each item can be further modified by:
        \begin{itemize}
          \item \textit{Quantity}: Numerical count and pluralization (e.g., ``3 oranges'').
          \item \textit{Visual Attributes}: Randomized modifiers including color (red, purple, etc.), scale (tiny, giant), and material state (matte, shiny, dusty, damaged).
          \item \textit{Uniqueness Constraint}: If required, a rule is appended: ``Every single object must be distinct in visual appearance to be easily identifiable.''
        \end{itemize}

      \item \textbf{[Selected Environment]}: The background setting and floor texture, chosen from:
        \begin{itemize}
          \item Modern living room (light wood texture)
          \item Clean white gallery floor
          \item Minimalist concrete studio
          \item Marble hall surface
          \item Open wooden deck
        \end{itemize}
    \end{itemize}

    \medskip
    \textbf{3. Spatial Configuration Rules}
    \begin{itemize}
      \item \textbf{Ground-Plane Constraint}: All objects are anchored to the same vertical coordinate ($z$-axis) to ensure they rest naturally on the surface.
      \item \textbf{Non-Occlusion Principle}: Objects are distributed with a focus on "negative space," ensuring that each object’s silhouette is fully visible from a high-angle perspective.
      \item \textbf{Stochastic Orientation}: Objects are rotated randomly along their local axes to simulate a non-artificial, "haphazard" distribution.
    \end{itemize}

  \end{tcolorbox}
  \caption{Prompt design for image generation using generative AI models.}
  \label{fig:prompt_for_aigc}
\end{figure*}

\begin{figure*}[htp]
  \begin{tcolorbox}[
      title=B. Prompt for Conversion from Three-View to Natural Language Reasoning,
      coltitle=white,
      colback=blue!3!white,
      colframe=blue!80!black,
      colbacktitle=blue!80!black,
      sharp corners=southwest,
      boxrule=0.8pt,
      fontupper=\small % 保持正文字体，仅缩小字号
    ]

    \textbf{System Prompt} \par
    \textit{You are a spatial reasoner who mentally reconstructs a 3D scene from partial viewpoints. You do not describe data; you form an internal 3D mental image and reason within it.}

    \medskip
    \hrule
    \medskip

    \textbf{Reasoning Generation Prompt} \par
    \vspace{0.5em}

    \textbf{\#\# Role} \par
    You are a spatial reasoner who directly perceives a 3D scene. You do not read data or analyze formats; you mentally see a solid block structure.

    \smallskip
    \textbf{\#\# Task} \par
    Given a Problem and Three views (front, left, top), write a single continuous reasoning narrative that reconstructs the scene, solves the problem, and ends with the final answer in \textbackslash{}boxed\{\}.

    \smallskip
    \textbf{\#\# How to read the views} \par
    The views may appear in two forms. \par
  1) \textbf{Block list format} \par
  Each view lists blocks like `\{x,y,z,color,visible\}`. Interpret fields as spatial cues:
  \begin{itemize}[noitemsep, topsep=0pt, leftmargin=1.5em]
    \item `x`: larger values correspond to positions further to the left
    \item `y`: larger values correspond to positions further to the front
    \item `z`: larger values correspond to positions higher up
    \item `color`: indicates block color (optional)
    \item `visible=false`: the block exists but is not seen from this view
  \end{itemize}
  Blocks will not float, and do not mention coordinate values or field names. Convert everything into spatial phrases (leftmost, back row, top layer, stacked above, hidden behind and so on).

  \smallskip
2) \textbf{Ordered list format} \par
Views provide sequences like `from-left-to-right` or `from-back-to-front`. Treat them as your scan order in that view and describe the relative layout accordingly.

\smallskip
\textbf{\#\# Reasoning constraints} \par
\begin{itemize}[noitemsep, topsep=0pt, leftmargin=1.5em]
  \item First examine the front view to infer left–right and vertical relations.
  \item Then examine the left view to resolve front–back structure and occlusion.
  \item Next examine the top view to determine the overall layout and coverage.
  \item Finally integrate all observations into a single coherent 3D mental model.
  \item When counting blocks, determine the quantity based on vertical stacking and use explicit numerical reasoning rather than purely verbal description.
\end{itemize}

\smallskip
\textbf{\#\# Output rules} \par
\begin{itemize}[noitemsep, topsep=0pt, leftmargin=1.5em]
  \item Output only the reasoning as plain paragraphs (no headings, no lists).
  \item First-person present tense (``I see… I turn… I check…'').
  \item Forbidden phrases: ``JSON'', ``data'', ``input'', ``provided'', ``coordinates'', or any literal field names like x, y, z.
  \item This answer should be derived through reasoning, do not reveal that an answer was provided.
\end{itemize}

\smallskip
\textbf{\#\# Input} \par
Problem: \{PROBLEM\} \par
(Internal check only) Answer: \{ANSWER\} \par
Three views: \{THREE\_VIEW\_JSON\}

\end{tcolorbox}
\caption{Detailed prompt for LLM-based three-view to natural language reasoning conversion.}
\label{fig:prompt_for_3view_to_nr}
\end{figure*}

\begin{figure*}[htp]
\begin{tcolorbox}[
  title=C. In-Context Learning Experiment Instruction,
  coltitle=white,
  colback=blue!3!white,
  colframe=blue!80!black,
  colbacktitle=blue!80!black,
  sharp corners=southwest,
  boxrule=0.8pt
]
\small
\textbf{Instruction: Three-View Spatial Reasoning}

You are an expert visual-spatial reasoning model. You must strictly follow these steps to analyze the input image and answer the question:
1. Front View Analysis: Describe the shapes and arrangement visible when looking at the object(s) directly from the front.
2. Left View Analysis: Describe the shapes and arrangement visible when looking at the object(s) directly from the left side.
3. Top View Analysis (Plan View): Describe the shapes and arrangement visible when looking down at the object(s) from above.
4. Synthesized Reasoning: Combine the analyses from the three views to accurately locate, identify, and count the objects.
5. Final Answer: The final response must be placed inside the \textbackslash boxed\{\} format.
\end{tcolorbox}

\begin{tcolorbox}[
  title=C.1 Teaching Demonstration for Block Counting,
  coltitle=white,
  colback=gray!3!white,
  colframe=gray!80!black,
  colbacktitle=gray!80!black,
  sharp corners=southwest,
  boxrule=0.8pt
]
\small
\textbf{Teaching Demonstration}

Question: How many blocks are there in the picture? Return your final response within \textbackslash boxed\{\}.

Reasoning:
Step 1: Front View Analysis  
I see three columns of blocks from the front. The leftmost column has no visible blocks. The middle column has two blocks stacked vertically. The rightmost column has one block at the bottom level.

Step 2: Left View Analysis  
Looking from the left side, I see three columns as well. The column nearest to me has two stacked blocks. The middle column has one block. The farthest column has one block.

Step 3: Top View Analysis  
From above, the layout is clear. I see blocks arranged in a grid of three columns (left, middle, right) and three rows (front, middle, back). The top view shows: in the middle column, there are three blocks in a vertical line from front to back; in the right column, there are two blocks — one in the middle row and one in the back row; the left column appears empty from the top.

Step 4: Synthesized Reasoning  
Combining the views, I can reconstruct the 3D arrangement. The front view's middle column (two blocks) corresponds to the top view's middle column having three blocks front-to-back, meaning one of them is hidden behind the front stack from the front perspective. The front view's right column (one block) matches the top view showing two blocks in that column, meaning the back one is hidden. The left view helps confirm the left column has blocks only in the back rows, not visible from the front. Counting all unique block positions from the combined grid, there are blocks at these coordinates: (middle column, front row, bottom), (middle column, middle row, bottom), (middle column, back row, bottom), (right column, middle row, bottom), (right column, back row, bottom), and (left column, back row, bottom) — six blocks in total.

Step 5: Final Answer  
\textbackslash boxed\{6\}
\end{tcolorbox}

\caption{In-Context Learning Experiment Instruction and Teaching Demonstration for Block Counting Task.}
\label{fig:icl_instruction_and_block_counting}
\end{figure*}

\begin{figure*}[htp]
\begin{tcolorbox}[
  title=C.2 Teaching Demonstration for Object Counting,
  coltitle=white,
  colback=gray!3!white,
  colframe=gray!80!black,
  colbacktitle=gray!80!black,
  sharp corners=southwest,
  boxrule=0.8pt
]
\small
\textbf{Teaching Demonstration}

Question: How many cylinder are there in this image? Return your final response within \textbackslash boxed\{\}.

Reasoning:  
Step 1: Front View Analysis  
I am scanning the arrangement of 3D geometric shapes from the front perspective to establish their lateral order. Looking from left to right, I distinctly identify a sequence of objects: first a large cube, then a cone, followed by a square pyramid. Continuing the scan to the right, I see a small sphere, a larger sphere, a small cube, and a rectangular cuboid. Finally, positioned at the very far right of this lineup, I observe a single cylinder.

Step 2: Left View Analysis  
Shifting my perspective to the left side to assess the depth and relative positioning, I track the objects again. In this projection, I can distinguish the large cube, the cuboid, and the small sphere. I also clearly see the cylinder positioned within the depth of the field, located near the square pyramid and the cone. This view confirms the existence of the cylindrical shape observed in the front view.

Step 3: Top View Analysis  
Observing the scene from a top-down ``bird's-eye'' view allows me to verify the total inventory of all items without occlusion. From this vantage point, I can categorize every shape present. I observe a total set consisting of two spheres, two cubes, one cone, one cuboid, and one square pyramid. Crucially, in this complete collection, I identify exactly one object with the circular cross-section and vertical sides characteristic of a cylinder.

Step 4: Synthesized Reasoning  
By cross-referencing all three views, I can establish a definitive count. The front view highlighted one cylinder at the end of the row. The side view corroborated its position in space. The top view provided a comprehensive inventory, explicitly isolating the distinct shapes and confirming that only one of them is a cylinder. There are no duplicates or hidden cylinders.

Step 5: Final Answer  
\textbackslash boxed\{1\}
\end{tcolorbox}

\begin{tcolorbox}[
  title=C.3 Teaching Demonstration for Object Position Reasoning,
  coltitle=white,
  colback=gray!3!white,
  colframe=gray!80!black,
  colbacktitle=gray!80!black,
  sharp corners=southwest,
  boxrule=0.8pt
]
\small
\textbf{Teaching Demonstration}

Question: In which direction is the square pyramid relative to the smaller sphere? Answer the question from the choices: [``front'', ``back'', ``right'', ``left'', ``front left'', ``front right'', ``back left'', ``back right''] Return your final response within \textbackslash boxed\{\}.

Reasoning:  
Step 1: Front View Analysis  
I start by examining the front view to determine the horizontal (left-right) alignment of the objects. Looking across the scene from left to right, I see a sequence of objects: a bigger cube, a cone, a square pyramid, followed by a smaller sphere, and then others. This observation clearly establishes that the square pyramid is positioned to the left of the smaller sphere.

Step 2: Left View Analysis  
Next, I look at the left side view to understand the depth (back-front) relationship. In this perspective, the arrangement from left to right represents the order from the back of the scene to the front. I observe the smaller sphere appearing earlier in the sequence (further to the ``left'' or back), followed by the square pyramid which appears later (further to the ``right'' or front). This indicates that the square pyramid is located in front of the smaller sphere.

Step 3: Top View Analysis  
Finally, I examine the top view to confirm these spatial coordinates. From this overhead perspective, I can verify both axes simultaneously. On the left-right axis, the square pyramid is clearly to the left of the smaller sphere. On the top-down axis (representing back-to-front), the smaller sphere is positioned closer to the ``top'' (back) edge, while the square pyramid is closer to the ``bottom'' (front) edge.

Step 4: Synthesized Reasoning  
By synthesizing the observations from all three views, I can pinpoint the relative location:  
1. Left-Right: The square pyramid is to the left of the smaller sphere.  
2. Front-Back: The square pyramid is in front of the smaller sphere.  
Combining these two directional components, the square pyramid is located to the front left of the smaller sphere.

Step 5: Final Answer  
\textbackslash boxed\{front left\}
\end{tcolorbox}
\caption{In-Context Learning Experiment Instruction and Teaching Demonstration for Object Reasoning.}
\label{fig:shared_instruction}
\end{figure*}

\section{Additional Experiments Details}
\label{appendix:exp_details}

\subsection{Benchmark Subsampling Procedure}
\label{appendix:benchmark_sampling}

Our main experiments focus on single-image, egocentric spatial queries.
To ensure comparability across public benchmarks with heterogeneous task definitions, we apply a unified filtering and sampling procedure and report the resulting instance counts.

\textbf{MindCube.}
We use the full \textbf{MindCube-Tiny} split without additional subsampling, resulting in 1,040 instances.

\textbf{CV-Bench.}
We restrict CV-Bench to its 2D category (denoted as CV-Bench 2D) and evaluate on all 1,438 instances in this subset.

\textbf{SPBench.}
We use the SPBench-SI split (single-image input) and keep only questions whose answer types match our setting: counting and relative-position classification.
This yields 306 instances.

\textbf{OmniSpatial.}
OmniSpatial contains four tasks; we use only Perspective\_Taking and further restrict to the Egocentric subset, resulting in 102 instances.

\textbf{ViewSpatial.}
We evaluate on the Camera perspective category and include all 2,769 instances in this subset.

\subsection{Experiments Settings}
\label{sub_appendix:exp_settings}

\textbf{Reproducibility.}
We report the key hyperparameters for our two-stage training pipeline, where both Orthographic Mental Simulation (OMS) and View-Grounded Reasoning (VGR) are trained with supervised fine-tuning (SFT), followed by GRPO-based reinforcement learning for refining the VGR stage.
We use LLaMA-Factory~\cite{zheng2024llamafactory} for SFT (OMS and VGR) and verl~\cite{sheng2025hybridflow} for the GRPO refinement.
Table~\ref{tab:training_hparams} summarizes the hyperparameter settings.

\begin{table}[htp]
  \centering
  \caption{Training hyperparameters for SFT and RL (GRPO, Stage~II refinement).}
  \label{tab:training_hparams}
  \small
  \begin{minipage}[t]{0.48\linewidth}
    \centering
    \textbf{SFT (Training Hyperparameters)}
    \vspace{2pt}

    \begin{tabular}{@{}ll@{}}
      \toprule
      Hyperparameter & Value \\
      \midrule
      per\_device\_train\_batch\_size & 1 \\
      gradient\_accumulation\_steps & 8 \\
      bf16 & true \\
      data\_seed & 42 \\
      gradient\_checkpointing & true \\
      lr\_scheduler\_type & cosine \\
      warmup\_ratio & 0.1 \\
      num\_train\_epochs & 1 \\
      max\_pixels & 262144 \\
      min\_pixels & 1024 \\
      deepspeed & stage3 \\
      \bottomrule
    \end{tabular}
  \end{minipage}
  \hfill
  \begin{minipage}[t]{0.48\linewidth}
    \centering
    \textbf{RL (Training Hyperparameters)}
    \vspace{2pt}

    \begin{tabular}{@{}ll@{}}
      \toprule
      Hyperparameter & Value \\
      \midrule
      RL algorithm & GRPO \\
      Training epochs & 5 \\
      Train batch size & 512 \\
      Actor learning rate & $1.0\times 10^{-6}$ \\
      Max prompt length & 8192 \\
      Max response length & 16384 \\
      Rollout samples / prompt & 8 \\
      KL regularization & enabled ($\beta=0.01$) \\
      Reward function & custom (slack / strict) \\
      \bottomrule
    \end{tabular}
  \end{minipage}
\end{table}

Stage~II reasoning traces are generated by Gemini-3-Flash using the prompt template in Appendix~\ref{fig:prompt_for_3view_to_nr}.
We report two RL variants using strict and slack reward settings (Section~\ref{subsec:framework}).

\begin{table}[htp]
  \centering
  \caption{Training data statistics and splits for OMS SFT (Stage~I).}
  \label{tab:training_data_stats}
  \begin{tabular}{lrrrrr}
    \toprule
    Stage & Total & Block count & Distinct block count & Object reasoning & Distinct object reasoning \\
    \midrule
    OMS SFT (Stage~I) & 19{,}544 & 6{,}000 & 5{,}000 & 3{,}544 & 5{,}000 \\
    \bottomrule
  \end{tabular}
\end{table}

\noindent
Stage~II (VGR SFT) data are sampled from the Stage~I pool.

\begin{table}[htp]
  \centering
  \caption{Additional training data statistics.}
  \label{tab:training_data_stats_extra}
  \begin{tabular}{lr}
    \toprule
    Stage & Total \\
    \midrule
    VGR SFT (Stage~II) & 21{,}000 \\
    RL (GRPO) & 30{,}000 (10{,}000 from Stage~II + 20{,}000 new) \\
    \bottomrule
  \end{tabular}
\end{table}

\textbf{OrthoMind-3D evaluation split.}
Table~\ref{tab:eval_splits} reports the instance counts for each evaluation split (In-Domain vs. Out-of-Domain) and task category.
Attribute-conditioned queries specify an explicit attribute (e.g., color), while cardinality queries evaluate pure counting or spatial reasoning without attribute cues.

\begin{table}[htp]
  \centering
  \caption{OrthoMind-3D evaluation split statistics. ``Attribute-conditioned'' denotes queries that specify explicit attributes (e.g., color).}
  \label{tab:eval_splits}
  \resizebox{0.4\linewidth}{!}{
  \begin{tabular}{llr}
    \toprule
    Split & Task & \# \\
    \midrule
    ID & Block counting (cardinality) & 500 \\
    ID & Block counting (attribute-cond.) & 2{,}202 \\
    ID & Object reasoning: counting & 300 \\
    ID & Object reasoning: positioning & 300 \\
    ID & Object reasoning: counting (attr.) & 500 \\
    ID & Object reasoning: positioning (attr.) & 500 \\
    \midrule
    OOD & Block counting (cardinality) & 235 \\
    OOD & Block counting (attribute-cond.) & 235 \\
    OOD & Object reasoning: counting & 108 \\
    OOD & Object reasoning: positioning & 109 \\
    \bottomrule
  \end{tabular}}
\end{table}

\subsection{Experiment Details for Visual Information Sufficiency}
\label{appendix:pre_validation}

Multimodal Large Language Models (VLMs) frequently struggle with geometric reasoning tasks. We conducted a diagnostic probing experiment to determine whether these errors stem from the visual encoder's inability to capture spatial information or the language model's failure to utilize these features. We hypothesize that if a lightweight classifier is capable of achieving high accuracy using frozen visual features, this result would demonstrate that the visual encoder has already extracted sufficient information. Consequently, the failure must stem from the downstream reasoning process. Through this probing analysis, we can therefore explicitly identify the locus of the performance bottleneck.

% \textbf{Experimental Setup.} We utilize the subset of the \textcolor{red}{``Cube Counting"} dataset to predict the total number of stacked cubes in an image, which contains up to 40 cubes. We selected Qwen3-VL-4B-Instruct as the base model. To evaluate the intrinsic quality of the visual representations, we froze the model parameters and extracted 2560-dimensional feature vectors directly from the output of the Vision Transformer. We then trained Multi-Layer Perceptron (MLP) probes with depths ranging from 2 to 4 layers on these features of the training split to predict the object count.

\textbf{Experimental Setup.} We utilize the Block Counting dataset in OrthoMind-3D to predict the total number of stacked cubes. The ground truth labels $y$ lie in the range $\mathcal{Y}_{gt} = \{2, \dots, 38\}$. We selected Qwen3-VL-4B-Instruct as the base model. Formally, let $\mathcal{E}_v(\cdot)$ denote the frozen visual encoder of the VLM. For each input image $x$, we extract the visual representation vector $\mathbf{z} \in \mathbb{R}^{d}$:
\begin{equation}
  \mathbf{z} = \mathcal{E}_v(x), \quad \text{where } d=2560.
\end{equation}
We then train a lightweight Multi-Layer Perceptron (MLP) probe, denoted as $f_\theta: \mathbb{R}^d \to \mathbb{R}^{C}$, parameterized by $\theta$. To simulate a realistic setting where the probe is agnostic to the specific upper bound of the dataset, we set the number of output classes to $C=50$, which is a superset of the ground truth label space ($|\mathcal{Y}_{gt}| < C$). The probe is trained to minimize the standard cross-entropy loss $\mathcal{L}_{CE}$ on the training set $\mathcal{D}_{train}$:
\begin{equation}
  \theta^* = \arg\min_{\theta} \sum_{(x_i, y_i) \in \mathcal{D}_{train}} \mathcal{L}_{CE}(f_\theta(\mathcal{E}_v(x_i)), y_i).
\end{equation}
We experimented with MLP depths ranging from 2 to 4 layers to evaluate the linear separability and non-linear information content of $\mathbf{z}$.

\textbf{Results and Analysis.} The classification accuracy of the probes on the test split is reported in Table~\ref{tab:probe_results}. Simple MLP probes achieved remarkable accuracy. The 4-layer MLP reached 55.8\% and substantially outperformed current state-of-the-art VLMs on this task. Even the most lightweight 2-layer MLP achieved 43.2\%.
Additionally, we observe that accuracy improves consistently as the probe depth increases. This trend suggests that the geometric information is implicitly encoded in the visual features and requires non-linear transformations to be decoded effectively. Crucially, even the deepest 4-layer probe is negligible in size compared to the massive parameter count of the LLM.

\begin{table}[h]
  \centering
  \caption{Block Counting Probe Accuracy on Qwen3-VL Visual Features.}
  \label{tab:probe_results}
  \begin{tabular}{lcc}
    \toprule
    Probe Architecture & Layers & Accuracy \\
    \midrule
    MLP [512, 256] & 2 & 43.2\% \\
    MLP [1024, 512, 256] & 3 & 50.4\% \\
    MLP [1024, 512, 256, 128] & 4 & 55.8\% \\
    \bottomrule
  \end{tabular}
\end{table}

This significant performance gap provides compelling evidence that the visual encoder is not a ``visual blind spot." Instead, it demonstrates that the encoder successfully extracts fine-grained quantitative features sufficient for this task. Consequently, it indicates that the LLM fails to correctly align or utilize these available visual features during its reasoning process. This finding justifies the necessity of our proposed method to bridge this reasoning gap.

\subsection{Experiment Results for 3-view Description Reasoning}
\label{subsec:3view_desc_results}

Table~\ref{tab:3view_desc_results} studies whether providing an explicit orthographic three-view description (front/left/top) can directly improve spatial reasoning accuracy.
We report the original single-view setting (``--'') and the setting augmented with a three-view description (``w/ 3-view Desc.'').

\begin{table}[htp]
  \centering
  \caption{Effect of explicit orthographic three-view descriptions on OrthoMind-3D.
  We report accuracy (\%) for Cardinality Block Counting and Object Reasoning (Overall).
  Colored numbers indicate the relative change compared to the single-view baseline (green: improvement; red: drop).}
  \label{tab:3view_desc_results}
  \small
  \setlength{\tabcolsep}{4pt}
  \resizebox{0.7\linewidth}{!}{
    \begin{tabular}{lcccc}
      \toprule
      \multirow{2}{*}{Model} & \multicolumn{2}{c}{Cardinality Block Counting} & \multicolumn{2}{c}{Object Reasoning (Overall)} \\
      \cmidrule(lr){2-3} \cmidrule(lr){4-5}
      & -- & w/ 3-view Desc. (Rel.$\uparrow$) & -- & w/ 3-view Desc. (Rel.$\uparrow$) \\
      \midrule
      GPT-5 & 15.8 & 28.6\;({\textcolor{green!60!black}{+81.0\%}}) & 68.2 & 88.9\;({\textcolor{green!60!black}{+30.4\%}}) \\
      Gemini-3-pro & 13.8 & 43.8\;({\textcolor{green!60!black}{+217.4\%}}) & 87.4 & 98.0\;({\textcolor{green!60!black}{+12.1\%}}) \\
      Claude-Sonnet-4.5 & 19.0 & 46.0\;({\textcolor{green!60!black}{+142.1\%}}) & 57.4 & 93.8\;({\textcolor{green!60!black}{+63.4\%}}) \\
      XiaoMiMo-VL-7B-RL & 18.2 & 35.6\;({\textcolor{green!60!black}{+95.6\%}}) & 63.1 & 60.1\;({\textcolor{red}{-4.7\%}}) \\
      GLM4.1V-9B & 15.0 & 10.8\;({\textcolor{red}{-28.0\%}}) & 51.9 & 57.1\;({\textcolor{green!60!black}{+10.0\%}}) \\
      InternVL3.5-4B & 10.6 & 12.2\;({\textcolor{green!60!black}{+15.1\%}}) & 54.4 & 57.6\;({\textcolor{green!60!black}{+5.9\%}}) \\
      InternVL3.5-8B & 15.6 & 16.8\;({\textcolor{green!60!black}{+7.7\%}}) & 58.4 & 61.2\;({\textcolor{green!60!black}{+4.8\%}}) \\
      Qwen2.5-VL-7B & 9.2 & 10.8\;({\textcolor{green!60!black}{+17.4\%}}) & 49.6 & 54.1\;({\textcolor{green!60!black}{+9.1\%}}) \\
      Qwen3-VL-4B-Instruct & 6.2 & 15.2\;({\textcolor{green!60!black}{+145.2\%}}) & 56.4 & 69.7\;({\textcolor{green!60!black}{+23.6\%}}) \\
      Qwen3-VL-8B-Instruct & 10.6 & 16.4\;({\textcolor{green!60!black}{+54.7\%}}) & 63.3 & 67.3\;({\textcolor{green!60!black}{+6.3\%}}) \\
      \bottomrule
    \end{tabular}
  }
\end{table}

Overall, adding a three-view description improves most models, with the largest gains on occlusion-heavy block counting.
Proprietary models benefit most (e.g., Gemini-3-pro: {\textcolor{green!60!black}{+217.4\%}}), suggesting that a structured, view-consistent representation is often the key bottleneck.
Open-source VLMs show mixed results: while many improve, some degrade (e.g., GLM4.1V-9B on block counting; XiaoMiMo-VL-7B-RL on object reasoning), implying that extra views can introduce noise when fusion is weak.
These results motivate 3ViewSense, which internalizes orthographic-view abstraction and learns to fuse views into a coherent 3D mental model.

\subsection{Analysis of the cumulative reward curve in the RL stage}
\label{appendix:rl_stage_curve}

\begin{figure}[t]
  \centering
  \includegraphics[width=0.56\textwidth]{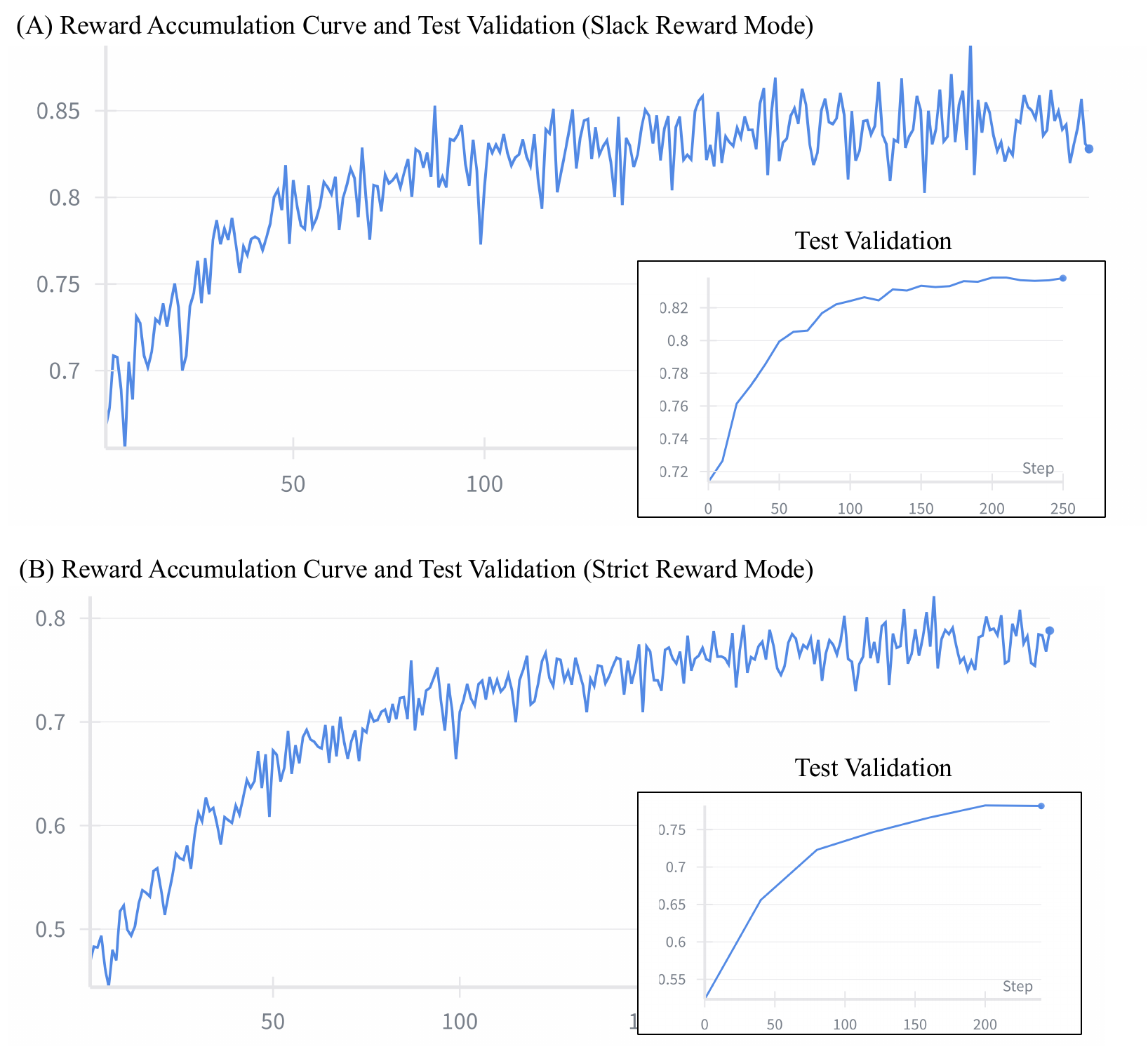}
  \caption{Cumulative reward curves during GRPO-based RL refinement.
  We report the training dynamics under two reward settings: strict reward (exact match) and slack reward (partial credit) as defined in Section~\ref{subsec:framework}.}
  \label{fig:rl_reward_curve_full}
\end{figure}

Figure~\ref{fig:rl_reward_curve_full} shows the cumulative reward trajectory during GRPO-based RL, initialized from the Stage~II VGR SFT model.
Both strict and slack reward settings exhibit a consistent upward trend without sustained oscillation or collapse, suggesting stable policy improvement under math-verifiable supervision.
Compared to the strict reward, the slack reward typically yields smoother training dynamics due to denser partial-credit feedback (Section~\ref{subsec:framework}), while the strict reward can be higher-variance because any small prediction error leads to zero reward.
Overall, these curves indicate that RL effectively refines answer correctness while preserving the view-grounded reasoning behavior learned during SFT.

\subsection{Qualitative Model Response Examples}
\label{appendix:response_examples}

We qualitatively compare responses from the base model and 3ViewSense on occlusion-heavy block counting.
As shown in Figure~\ref{fig:model_response_examples}, the base model often produces lengthy and repetitive reasoning with unstable intermediate hypotheses, whereas 3ViewSense yields more concise and structured traces by explicitly grounding the reasoning process in orthographic-view mental simulation.

\begin{figure*}[t]
  \begin{center}
    \centerline{\includegraphics[width=0.98\textwidth]{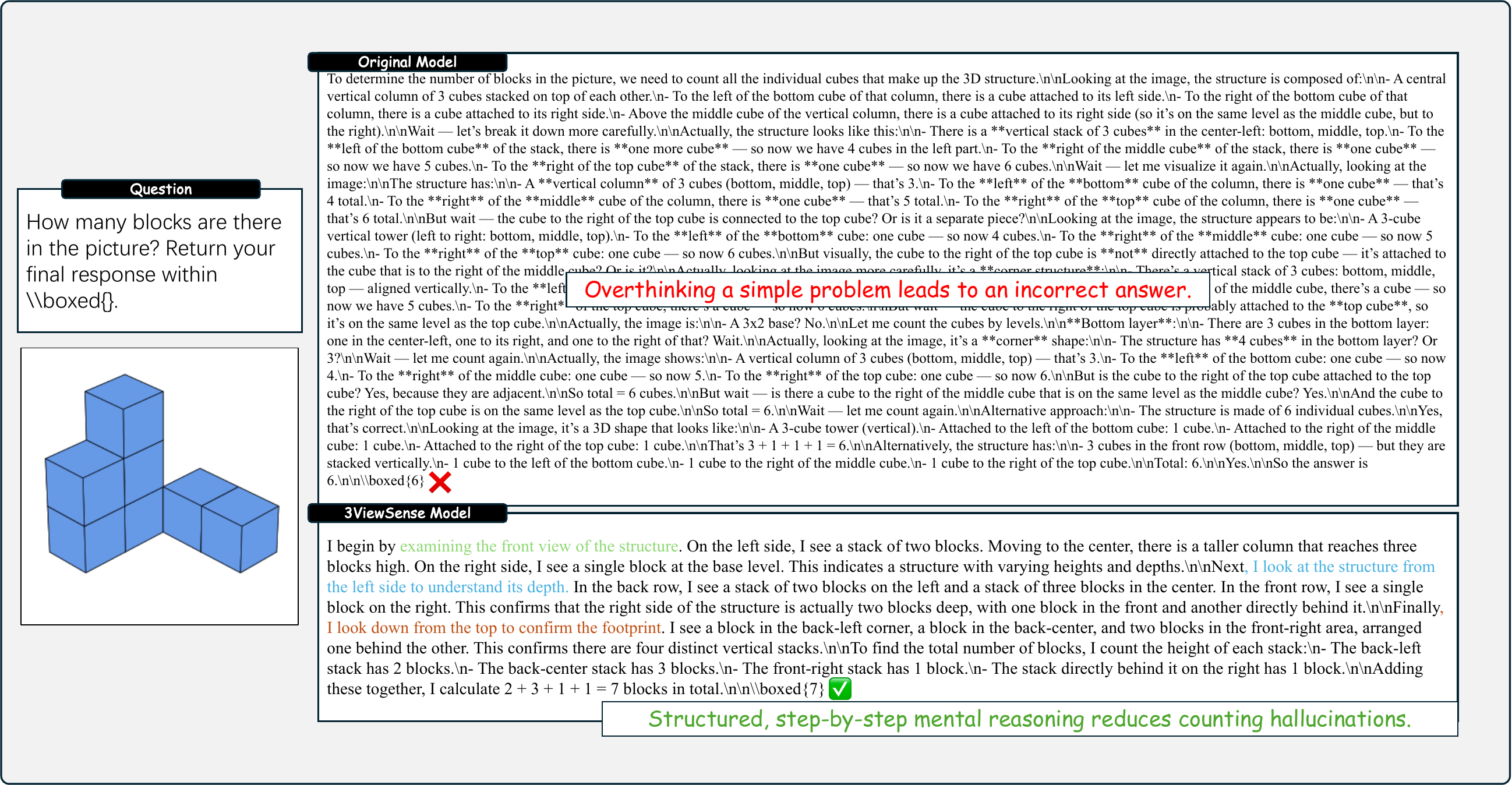}}
    \caption{Qualitative examples of model responses on block counting under occlusion. Compared to the base model, 3ViewSense produces more concise and structured reasoning, improving robustness and accuracy.}
    \label{fig:model_response_examples}
  \end{center}
\end{figure*}

\subsection{Case Study on External Benchmarks}
\label{appendix:case_study_on_external_bench}

Figure~\ref{fig:case_study_on_external_bench_reasoning_content} presents qualitative examples of 3ViewSense on ViewSpatial and SPBench-SI.
Although these benchmarks differ from OrthoMind-3D in task format and scene distribution, 3ViewSense still follows a view-grounded reasoning process: it identifies the relevant spatial configuration, forms intermediate view-consistent observations, and derives the final answer from them.
These examples provide qualitative evidence that the learned reasoning behavior can transfer to external spatial reasoning benchmarks.

\begin{figure}[t]
  \centering
  \includegraphics[width=0.9\textwidth]{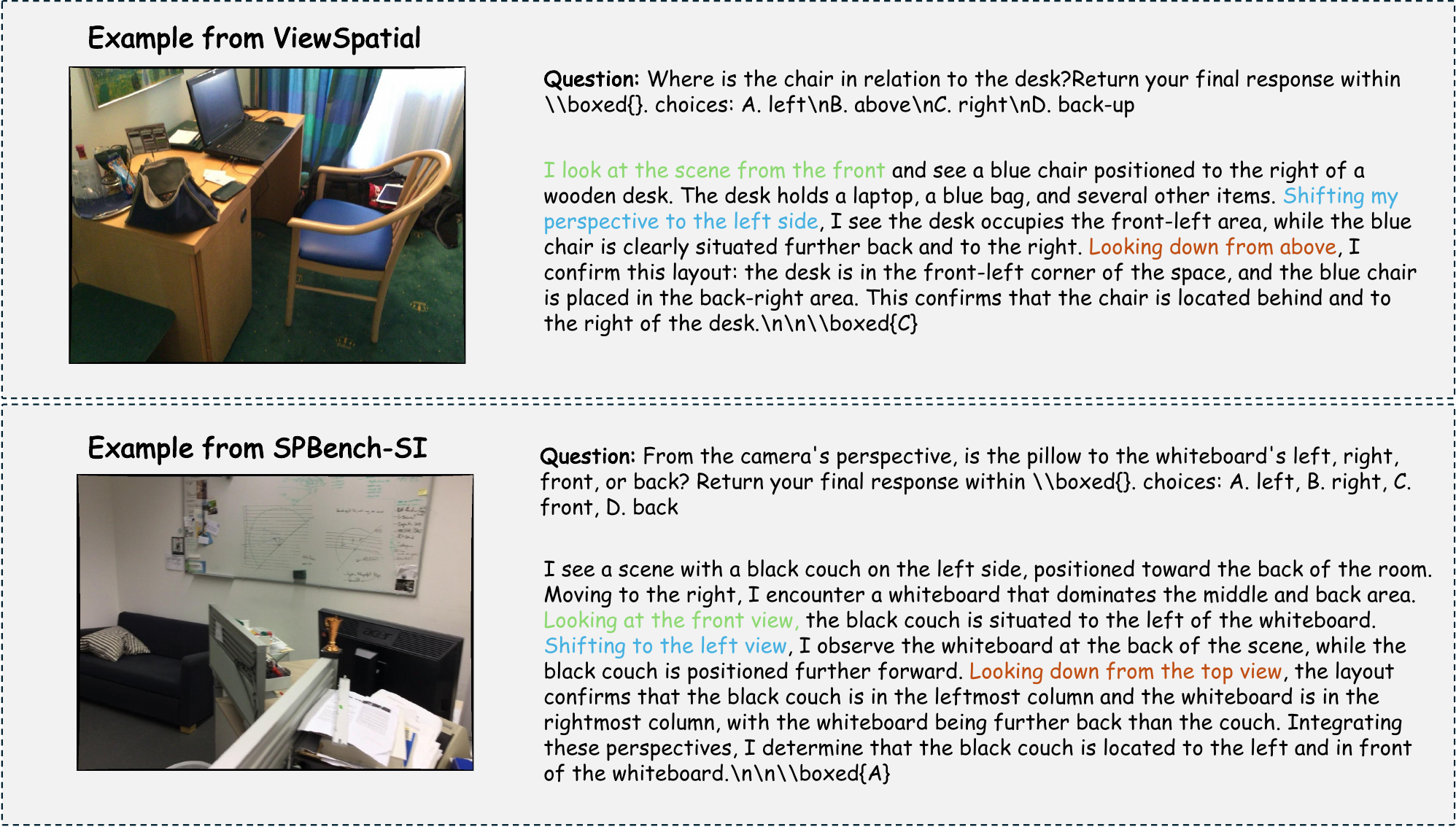}
  \caption{Case study of 3ViewSense reasoning on external benchmarks.
  We show qualitative examples from ViewSpatial and SPBench-SI, illustrating how 3ViewSense transfers the learned view-grounded reasoning process to external spatial reasoning tasks.}
  \label{fig:case_study_on_external_bench_reasoning_content}
\end{figure}

\end{document}